\definecolor{darkmagenta}{rgb}{0.6, 0.0, 0.5} % slightly dark magenta
\definecolor{darkblue}{rgb}{0.0, 0.0, 0.5} % dark blue
\definecolor{darkgreen}{rgb}{0.0, 0.69, 0.0} % dark green
\newcommand{\llamaThreeSeventyB}{\textcolor{darkmagenta}{\texttt{LLaMA-3.1-70B}}}
\newcommand{\llamanvidia}{\textcolor{darkmagenta}{\texttt{LLaMA-3.1-70B-NV}}}
\newcommand{\qwen}{\textcolor{darkmagenta}{\texttt{Qwen2.5-72B}}}
\newcommand{\gptMini}{\textcolor{darkmagenta}{\texttt{gpt-4o-mini}}}
\newcommand{\gptFour}{\textcolor{darkmagenta}{\texttt{gpt-4o}}}
\newcommand{\gptThreeFive}{\textcolor{darkmagenta}{\texttt{gpt-3.5-turbo-16k}}}
\newcommand{\llamaEight}{\textcolor{darkmagenta}{\texttt{LLaMA-3.1-8B}}}
\newcommand{\qwenOneFour}{\textcolor{darkmagenta}{\texttt{Qwen2.5-14B}}}
\newcommand{\mistralSeven}{\textcolor{darkmagenta}{\texttt{Mistral-7B}}}
\newcommand{\llamaThree}{\textcolor{darkmagenta}{\texttt{LLaMA-3.2-3B}}}
\newcommand{\qwenThree}{\textcolor{darkmagenta}{\texttt{Qwen2.5-3B}}}
\newcommand{\greedy}{\textcolor{darkblue}{\texttt{Greedy}}}
\newcommand{\optimization}{\textcolor{darkblue}{\texttt{Optimization}}}
\newcommand{\hillclimb}{\textcolor{darkblue}{\texttt{Hill Climb}}}
\newcommand{\ppi}{\textsl{PPI}}
\newcommand{\vrpi}{\textsl{VRPI}}
\newcommand{\nlvpi}{\textsl{NLVPI}}
\newtheorem{proposition}{Proposition}
\renewcommand{\paragraph}[1]{\smallskip\noindent\textbf{#1}~}
\newcommand{\BibTeX}{B\kern-.05em{\sc i\kern-.025em b}\kern-.08em\TeX}
\begin{document}

\begin{frontmatter}

%%% Use this command to specify your submission number.
%%% In doubleblind mode, it will be printed on the first page.

\paperid{123} 

%%% Use this command to specify the title of your paper.

\title{LLMs for Resource Allocation: A Participatory Budgeting Approach to Inferring Preferences}

%%% Use this combinations of commands to specify all authors of your 
%%% paper. Use \fnms{} and \snm{} to indicate everyone's first names 
%%% and surname. This will help the publisher with indexing the 
%%% proceedings. Please use a reasonable approximation in case your 
%%% name does not neatly split into "first names" and "surname".
%%% Specifying your ORCID digital identifier is optional. 
%%% Use the \thanks{} command to indicate one or more corresponding 
%%% authors and their email address(es). If so desired, you can specify
%%% author contributions using the \footnote{} command.

% \author[]{Paper ID 6529}
\author[A]{\fnms{Sankarshan}~\snm{Damle}\orcid{0000-0003-1460-6102}\thanks{Corresponding Author. Email: \texttt{sankarshan.damle@epfl.ch}. Published in the Proceedings of the 28th European Conference on Artificial Intelligence (ECAI 2025). This is the full version.}}
\author[A]{\fnms{Boi}~\snm{Faltings}\orcid{0000-0002-7188-7230}}
% \author[B,C]{\fnms{Third}~\snm{Author}\orcid{....-....-....-....}} 

\address[A]{Artificial Intelligence Laboratory (LIA), EPFL}
% \address[B]{Short Affiliation of Second Author and Third Author}
% \address[C]{Short Alternate Affiliation of Third Author}

%%% Use this environment to include an abstract of your paper.

%%%% 7 pages + 1 for reference
%%%% table captions at the top, figure captions bottom
\begin{abstract}
Large Language Models (LLMs) are increasingly expected to handle complex decision-making tasks, yet their ability to perform structured resource allocation remains underexplored. Evaluating their reasoning is also difficult due to data contamination and the static nature of existing benchmarks. We present a dual-purpose framework leveraging Participatory Budgeting (PB) both as (i) a practical setting for LLM-based resource allocation and (ii) an adaptive benchmark for evaluating their reasoning capabilities. We task LLMs with selecting project subsets under feasibility (e.g., budget) constraints via three prompting strategies: greedy selection, direct optimization, and a hill-climbing–inspired refinement. We benchmark LLMs' allocations against a utility-maximizing oracle. Interestingly, we also test whether LLMs can infer structured preferences from natural-language voter input or metadata, without explicit votes. By comparing allocations based on inferred preferences to those from ground-truth votes, we evaluate LLMs’ ability to extract preferences from open-ended input. Our results underscore the role of prompt design and show that LLMs hold promise for mechanism design with unstructured inputs.
\end{abstract}
\end{frontmatter}

\section{Introduction}
\label{sec::intro}
The integration of Large Language Models (LLMs)~\citep{achiam2023gpt} into real-world applications has significantly advanced capabilities in complex problem-solving and decision-making, ranging from task automation to enabling autonomous negotiation agents~\citep{li2023camel,microsoft2023copilot,google2023bardupdate,su-etal-2024-language}. Although originally trained via self-supervised learning on large-scale corpora, LLMs now exhibit emergent reasoning abilities that go beyond their initial design goals~\citep{bubeck2023sparks}. Examples include systems such as ChatGPT Plugins~\citep{openai2023chatgptplugins} and AutoGPT~\cite{Significant_Gravitas_AutoGPT}, which demonstrate the ability to perform complex, goal-directed tasks. 

Many real-world decision problems (e.g., scheduling~\citep{pinedo1992scheduling}, logistics~\citep{toth2014vehicle}, and participatory governance~\citep{aziz2021participatory}) can be framed as \textit{resource allocation} tasks, which involve distributing limited resources among competing agents. These tasks are typically formulated as \textit{constrained optimization} problems, subject to feasibility constraints such as budgets, conflicts, or capacities. This setting naturally arises in civic budgeting, humanitarian aid, and multi-agent planning~\citep{aziz2021participatory,damle2019aggregating}. Given their growing reasoning capabilities, a natural question is whether LLMs can act as \textit{social planners}, interpreting task constraints to produce efficient allocations.

In parallel, traditional static benchmarks used to evaluate model performance on fixed tasks are becoming increasingly insufficient~\citep{guo2023evaluating}. They particularly fail to reflect the adaptive nature of real-world decision-making tasks. Moreover, concerns about data contamination -- where benchmark data overlaps with training data -- raise questions about whether LLMs are genuinely reasoning or merely recalling patterns~\citep{singh2024evaluation,xu2024benchmark}. Using LLMs as social planners in resource allocation offers a natural and functional testbed for evaluating their reasoning in dynamic, goal-driven contexts.

\begin{figure}[t]
    \centering 
    \includegraphics[width=\linewidth,trim={20pt 70pt 60pt 70pt},clip]{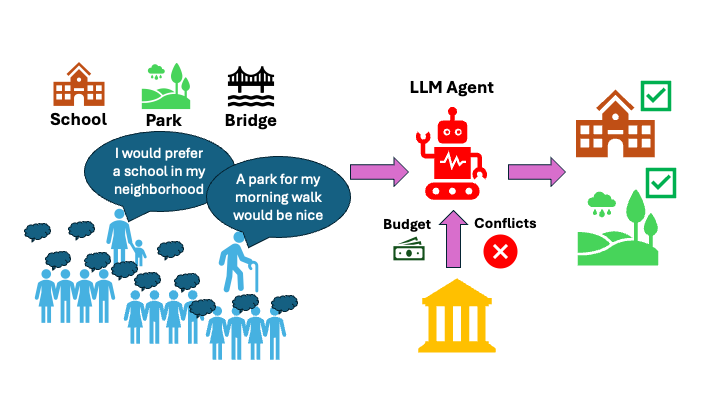} 
    \caption{\textbf{Participatory Budgeting as a Benchmark for LLM Evaluation:} An LLM agent acting as a social planner allocates community resources based on dynamic factors, including community preferences, budget constraints, and project conflicts.} 
    \label{fig:teaser} 
\end{figure}

\paragraph{Mechanism Design-based LLM Reasoning.}
Recent research has explored LLMs’ capabilities for reasoning and decision-making in adaptive environments, focusing on tasks such as negotiation, preference inference, and strategic interactions~\citep{kambhampati2024position}. In this paper, we introduce a \textit{mechanism design}~\citep{narahari2014game} framework to evaluate LLMs as autonomous decision-makers. We focus on \textit{Participatory Budgeting} (PB)~\citep{aziz2021participatory}, a well-established mechanism used by municipalities to allocate resources based on collective preferences~\cite{rocke2014framing}. In PB, participants express their preferences regarding resource allocation, which guides the decision-making process. This task provides a complex real-world environment where the LLM, acting as a {social planner}, allocates resources by interpreting preferences, reasoning under constraints, and optimizing allocations. See Figure~\ref{fig:teaser} for an illustration.

Unlike static benchmarks, PB tasks are inherently adaptive, requiring LLMs to handle evolving factors such as changing budget constraints, project conflicts, or shifting preferences. To elicit allocations, we prompt LLMs using three distinct strategies -- greedy selection, direct optimization, and a hill-climbing–inspired refinement~\cite{peter2021modern} -- each designed to probe different aspects of their reasoning and decision-making. To evaluate LLM performance, we compare the allocations they produce to those generated by a utility-maximizing oracle~\citep{rey2023computational}, providing a structured and meaningful assessment of their decision-making capabilities. 

\paragraph{Inferring Preferences from Natural Language or Community Metadata.}
We also investigate how well LLMs can infer structured preferences from (i) natural language inputs or (ii) only community metadata. In many real-world applications, collecting structured preference data -- such as numerical matrices or rankings -- can be difficult or impractical. We explore whether LLMs can accurately interpret natural language descriptions of preferences or infer preferences from community metadata (e.g., age, education, and location). Our results indicate that LLMs are capable of effectively processing nuanced, human-readable information, yielding improved decision-making compared to scenarios where they rely on explicit numerical representations. This suggests that LLMs have significant potential for mechanism design applications, particularly when traditional methods for collecting structured preferences are infeasible or costly~\citep{nisan2006bidding}.

\paragraph{Our Contributions.}
To summarize, we investigate whether LLMs can function as effective social planners in resource allocation tasks. This perspective also enables us to establish resource allocation as a functional benchmark for evaluating LLM reasoning. We examine how prompt design influences their decision-making behavior and demonstrate that LLMs can infer community preferences from natural language or metadata. Our key contributions are:

\begin{enumerate}[leftmargin=*,itemsep=0.1em]
    \item We introduce a mechanism design-based framework to evaluate LLM reasoning using real-world Participatory Budgeting (PB) instances from the \textsc{Pabulib} library~\cite{faliszewski2023participatory}. In this framework, the LLM acts as a social planner, allocating limited resources subject to feasibility constraints (refer to Figure~\ref{fig:teaser}).
    
    \item We study three prompt strategies: (i) \greedy, which imitates a utility-maximizing oracle~\citep{rey2023computational}; (ii) \optimization, which treats the PB task as a knapsack-style optimization problem~\cite{yamada2002heuristic}; and (iii) \hillclimb, where the LLM builds the final allocation by first reasoning over five smaller project subsets.
    
    \item We show that LLMs can perform competitively (sometimes even outperforming their structured-input variants) when provided only high-level community metadata (e.g., age, education, location). By prompting the model to output its inferred preferences, we assess its \textit{Theory-of-Mind} abilities~\citep{Kosinski2024,premack1978does}, by comparing inferred preferences to ground-truth data.
\end{enumerate}

\section{Related Work}
\label{sec::rw}

Recent research highlights the success of LLMs as agents. For instance, \citet{park23gen} discuss their potential for studying collective linguistic behavior. \citet{zhang-etal-2024-exploring} study NLP systems as LLM agents collaborating with human-like behavior and enhanced efficiency. We next review prior works, categorized into those focusing on LLMs as agents and those on their reasoning abilities.

\smallskip
\noindent\textbf{LLMs as Game-theoretic Agents.}
Recent works explore the intersection of game theory, mechanism design, and LLMs. \citet{duetting2024mechanism} examine auction-based mechanisms where LLMs participate in content creation, studying how bidding behavior influences AI outputs. Similarly, \citet{dubey2024auctions} analyze auctions where bidders compete for placement in LLM-generated summaries, such as ad auctions embedded in AI-generated text. \citet{zhu2024evidence} investigate LLM behavior in simulated auctions, introducing a synthetic data process to guide auction design.

\paragraph{LLM-based Agent Negotiation.}
% LLMs have also been studied as autonomous negotiators. 
CAMEL~\citep{li2023camel} introduces a role-playing framework where LLM agents collaborate via inception prompting, reducing human intervention in problem-solving. \citet{akata2023playing} examine LLM cooperation in two-player games, noting strong performance in simple strategies but suboptimal behavior in coordination games. \citet{mozikov2024good} show that emotional prompting (e.g., instructing GPT-4 to respond "angrily") disrupts negotiation alignment. \citet{bianchi2024llms} develop \textsc{NegotiationArena}, for benchmarking LLM negotiation strategies, showing that behavioral tactics improve outcomes. \citet{frisch-giulianelli-2024-llm} study language interaction in persona-conditioned LLM agents in negotiation, revealing varying personality consistency and linguistic alignment.

%%% https://aclanthology.org/2024.acl-long.331.pdf

\paragraph{LLMs as Agents in Urban Planning.}
% LLMs are increasingly applied in urban planning.
\citet{wang2024large} discuss LLM's role in spatial analysis and city planning. \citet{jiang2024urbanllm} fine-tune \texttt{LLaMa-2-7B} on spatio-temporal data from the Singapore Open Data API, improving autonomous urban activity planning. \citet{zhou2024large} introduce a multi-agent framework simulating participatory planning through discussions between urban planners and resident agents. \citet{sel2025llms} explore LLMs' ability to generate long-horizon plans, enhancing the Algorithm-of-Thoughts (AoT) framework for better planning benchmarks.

\paragraph{Benchmarking LLM Reasoning.}
Recent studies use negotiation as a benchmark for LLM reasoning. \citet{abdelnabi2024llmdeliberation} introduce a six-party, multi-issue negotiation to assess deliberation, while \citet{davidson2024evaluating} examine multi-agent negotiations with opposing and compatible interests to evaluate model competition and instruction-following. These works highlight negotiation as a proxy for complex reasoning and decision-making. 

\citet{wang-etal-2024-rethinking-bounds} show that single-agent LLMs with strong prompts match multi-agent discussions, which excel only without prompt demonstrations. \citet{heyman2025evaluatingsystematicreasoningabilities}, the closest work to ours, evaluate LLMs on \textit{graph-coloring}, a constrained optimization problem akin to PB, showing that LLMs consistently fail to respect constraints, even in simple coloring setups.

Unlike prior work on LLMs as (game-theoretic) agents or reasoning benchmarks, we examine their role as \textit{social planners} in participatory budgeting (PB), assessing their ability to allocate resources given \textit{real-world} instances. Our findings highlight LLMs' capacity to process/infer complex preferences and act as autonomous policymakers in dynamic, preference-driven environments.

\section{Background}
\label{sec::bg}
Mechanism design~\citep{narahari2014game} is a field in game theory that focuses on structuring decision-making processes to achieve optimal outcomes, even when participants act in their self-interest. It primarily involves designing rules that incentivize truthful behavior and efficient allocations, providing a framework for evaluating decision-making systems. Participatory Budgeting (PB) fits naturally in this framework and offers a dynamic setting to test LLMs' reasoning capabilities.

\paragraph{PB Model}~\citep{aziz2021participatory,rey2023computational}. A standard PB model is denoted by the tuple $\textsc{PB} := \langle \mathbb{P}, \mathbb{V}, \beta, \texttt{cost} \rangle$. Here, $\mathbb{P}:=[m]$ denotes the set of $m\in \mathbb{N}$ \emph{projects}, $\mathbb{V}:=[n]$ denotes the set of $n\in \mathbb{N}$ \emph{voters}, and $\beta \in \mathbb{R}_{>0}$ is the available budget. The cost function  $\texttt{cost}:\mathbb{W}\rightarrow \mathbb{R}_{>0}$ defines the cost of provisioning a (sub)set of projects $\mathbb{W}\subseteq \mathbb{P}$. That is, $\texttt{cost}(\mathbb{W}):=\sum_{p\in \mathbb{W}} \texttt{cost}(p)$, where $\texttt{cost}(p)$ is the cost of provisioning the singleton set (or project) $p$.

\paragraph{Eliciting Preferences}~\citep{aziz2021participatory,rey2023computational}. Each voter $i \in \mathbb{V}$ expresses her preference for each project $j \in \mathbb{P}$ by revealing her score $s_i(p)$. The score can either be binary approval $s_i(p)\in \{0,1\}$, where the voter indicates her support for a project, or cardinal $s_i(p)\in \mathbb{R}_{\geq 0}$, where the voter provides a quantitative rating reflecting the degree of support for a project.

\paragraph{Utility Models.} \emph{Utility} helps quantify the value a voter receives from the provision of different projects. Given a set of provisioned projects $\mathbb{W}\subseteq \mathbb{P}$, for each voter $i \in \mathbb{V}$, there are two (natural) ways to define her utility~\citep{faliszewski2023participatory}.

First, \emph{score utility}, where the utility does not depend on the costs of the projects. That is, the score utility for each voter $i \in \mathbb{V}$ is $u_i^\texttt{sc}=\sum_{p \in \mathbb{W}} s_i(p)$.
Second, \emph{cost utility}, which assumes that costlier projects carry more value to the voters. That is, the cost utility for each voter $i \in \mathbb{V}$ is $u_i^\texttt{cost}=\sum_{p \in \mathbb{W}} s_i(p)\cdot \texttt{cost}(p)$.

When the distinction between score and cost utility is not necessary, we use $u_i$ to refer to voter $i$'s utility.
Utility is a fundamental metric that ensures selected projects maximize community benefit. To derive such a subset in PB, the social planner can employ algorithms such as the Utilitarian Greedy (UG) algorithm~\citep{rey2023computational}.

\paragraph{Utilitarian Greedy (UG) Algorithm.}
The UG algorithm~\citep{rey2023computational} aims to maximize the total utility of selected projects within a budget constraint. It starts with an empty set \( \mathbb{W} \). It iteratively adds a project \( p \) to $\mathbb{W}$ that maximizes the utility-to-cost ratio \( \sum_{i \in \mathbb{V}} \frac{u_i(p)}{\texttt{cost}(p)} \), provided that the total cost remains within the budget \( \beta \), i.e., 
    $\texttt{cost}(\mathbb{W}) + \texttt{cost}(p) \leq \beta. $

The process continues until no more projects can be added without exceeding the budget. Algorithm~\ref{algo::ug} provides the formal description for completeness. The UG algorithm is provably optimal up to one project~\citep{dantzig1957discrete}. That is, for any  allocation $\mathbb{W}$ returned by the UG algorithm, there exists another project $p \not\in \mathbb{W}$ such that~\citep{faliszewski2023participatory}:
$$
\sum_{i\in\mathbb{N}} u_i\left(\mathbb{W}\cup \{p\}\right) \geq \max_{\mathbb{W}^\prime:\texttt{cost}(\mathbb{W}^\prime)\leq \beta} \sum_{i\in\mathbb{N}} u_i\left(\mathbb{W}^\prime\right).
$$

\subsection{\textsc{Pabulib}: \underline{PA}rticipatory \underline{BU}dgeting \underline{LIB}rary}
\textsc{Pabulib}~\citep{faliszewski2023participatory} is an open-source PB library with real-world PB instances. 

\paragraph{PB Instance}~\citep{faliszewski2023participatory}. A typical PB instance is a single UTF-8 encoded text file with a "\texttt{.pb}" extension, divided into three sections. The \texttt{META} section includes general metadata, such as the country, budget, and the number of voters. The \texttt{PROJECTS} section details project costs, categories, and targets. The \texttt{VOTES} section records the votes, which can be of four types -- approval, ordinal, cumulative, or scoring -- and may also include voter metadata, such as age, education, and sex. Appendix~\ref{app::pabulib} presents a sample PB instance.

\paragraph{Dataset.}
\textsc{Pabulib} contains approximately 1,200 \texttt{.pb} instances collected from community projects in cities like Amsterdam, Krakow, and Warszawa, as well as through Amazon Mechanical Turk. We focus on 24 instances from Mechanical Turk (Appendix~\ref{app::pabulib}) as (i) algorithmic solutions cannot be computed for many of the 1,200 instances, (ii) several do not include voter metadata, and (iii) some are too large for an LLM's context window. These 24 instances are also in English -- unlike other \texttt{.pb} instances -- ensuring that performance comparisons across LLMs are not influenced by the models' potential biases toward English~\citep{wendler-etal-2024-llamas}. 

These selected 24 instances feature either 10 or 20 projects across categories such as Environment, Culture, Security, and Education. Each instance includes  $\approx 75$ voters, with voter metadata containing their age, sex, and education. Voters express their preferences as subsets of projects, and each instance operates under a fixed budget of USD 400K or 500K.

\section{Methodology}
\label{sec::method}

\subsection{Setting up the PB Instances}

To evaluate an LLM’s reasoning and instruction-following capabilities in participatory budgeting (PB), we consider three distinct types of PB instances. Each type presents a unique challenge, enabling us to assess the model’s ability to (i) process structured preferences (\ppi), (ii) infer missing preferences (\vrpi), or (iii) interpret preferences expressed in natural language (\nlvpi).
\begin{enumerate}[leftmargin=*,itemsep=0.1em]
    \item \textbf{Plain PB Instance} (\ppi): The standard PB setup (Section~\ref{sec::bg}). The LLM processes a structured PB instance, including project details, numerical voter preferences, and budget constraints, to determine a utility-maximizing $\mathbb{W}$.
    
    \item \textbf{Vote-Removed PB Instance} (\vrpi): Here, voter preferences are excluded. \vrpi\ comprises only project descriptions, budget constraints, and voter metadata (e.g., age, sex, education). \vrpi\ tests whether the LLM can infer likely preferences and optimize allocations using demographic and contextual cues.
    
    \item \textbf{Natural Language Votes PB Instance} (\nlvpi): Here, voter preferences are provided in natural language instead of numerical data to assess the LLM’s ability to interpret unstructured textual inputs. \nlvpi\ attempts to simulate real-world scenarios where preferences may be expressed informally rather than as explicit votes. We construct \nlvpi\ by prompting \texttt{gpt-4o}~\cite{openai2024gpt4o} to translate numerical votes from \ppi\ into natural language descriptions.
\end{enumerate}
Appendix~\ref{app::pabulib} provides an example instance of each of these types.  
By comparing an LLM's performance across these settings, we gain insights into its ability to process structured vs. unstructured data, predict missing preferences, and adapt to different preference formats.

\subsection{Few-shot Prompting \& LLMs in Focus}
For our experiments, we use few-shot prompting~\citep{brown2020language}. Specifically, we provide PB instances as exemplars to the LLM. Each exemplar includes (i) the PB instance (either \ppi, \vrpi, or \nlvpi) and (ii) a walkthrough of the UG algorithm applied to a dummy PB instance.

Given an LLM\footnote{We refer to LLMs generically without discussing their size, as size is not central to our focus.} \( f_\theta(\cdot) \) with parameters \( \theta \), the few-shot prompting set can be defined as:
$
\mathbb{C} := \{ I, x_1, \dots, x_k, y \}
$
where \( I \) is the \textit{system prompt} (explaining the PB setup), and \( x_1, \dots, x_k \) are the demonstration exemplars. Here, \( k \in \mathbb{Z}_{\geq 0} \) is the number of exemplars provided. With \( k = 0 \), this is zero-shot prompting, with \( k \geq 1 \), it is few-shot. The exemplars are dummy instances either of type \ppi, \vrpi, or \nlvpi\ and include their respective UG solutions. We use \( k = 2 \)\footnotemark for our experiments.

\footnotetext{Increasing \( k \) (e.g., \( k=4 \)) did not improve performance, likely because the simpler PB instances did not reveal additional patterns beyond the \texttt{.pb} structure and metadata. \( k \) is also limited by the LLM's context window.}

The test PB instance, \( y \), is one of the 24 real-world PB instances from \textsc{Pabulib}, which could be of type \texttt{PPI}, \texttt{VRPI}, or \texttt{NLVPI}. The LLM uses the setup and exemplars to generate an allocation for the test instance \( y \), i.e., the LLM outputs an allocation \( \mathbb{W} = f_\theta(y; x_1, \dots, x_k, I) \).

\smallskip
\noindent\textbf{LLMs in Focus.}
We evaluate
 auto-regressive LLMs, focusing on instruction fine-tuned versions of: (i) \llamaThreeSeventyB~\citep{dubey2024llama}, (ii) \llamanvidia~\citep{wang2024helpsteer2} (\texttt{NVIDIA}'s official fine-tuned variant), and (iii) \qwen\footnotemark~\citep{yang2024qwen2}. We prioritize open-source models for reproducibility.

\footnotetext{As of writing, \qwen\ and \llamaThreeSeventyB\ rank 1st and 3rd on the Hugging Face Open LLM leaderboard (among official providers)~\citep{openlmleaderboard}.}

\begin{figure}[t]
\centering
\footnotesize
\noindent\fbox{%
\parbox{0.95\linewidth}{%
\begin{itemize}[itemsep=0.1em]
    \item \textbf{Objective:} Select a subset of projects that maximizes total utility (voter approvals) while strictly respecting a budget constraint.
    \item \textbf{Greedy Strategy:} Implement the Utilitarian Greedy (UG) algorithm: compute each project's utility-to-cost ratio and iteratively select the project with the highest ratio, as long as the budget allows.
    \item \textbf{Step-by-Step Reasoning:} For each project considered:
    \begin{itemize}[noitemsep]
        \item Calculate the utility-to-cost ratio,
        \item Check whether it fits within the remaining budget,
        \item Include it if feasible, otherwise skip it.
    \end{itemize}
    \item \textbf{Full Pass Required:} Sort all projects by utility-to-cost ratio and consider them in that order, stopping only when no more projects can be added without exceeding the budget.
    \item \textbf{Output:} Return a JSON object formatted as follows:
\end{itemize}

\begin{center}
\texttt{
\{ "Chain-of-Thought": "",
"allocation": [] \}
}
\end{center}

}%
}
\caption{\greedy: Mimicking UG Algorithm~\cite{faliszewski2023participatory} for Project Selection}
\label{fig:greedy_strategy}
\end{figure}%

\begin{figure}[t]
\centering
\footnotesize
\noindent\fbox{%
\parbox{0.95\linewidth}{%
\begin{itemize}[itemsep=0.1em]
    \item \textbf{Objective:} Select a subset of projects that maximizes total utility (voter approvals) while strictly respecting a budget constraint.
    \item \textbf{Knapsack-Style Strategy:} Formulate the problem as a 0-1 knapsack problem: compute each project's utility-to-cost ratio, and iteratively select the highest feasible project, avoiding those that exceed the budget.
    \item \textbf{Step-by-Step Reasoning:} For each project, explain the following:
    \begin{itemize}[noitemsep]
        \item The calculated utility-to-cost ratio,
        \item Whether the project is selected or skipped,
        \item The result of the budget check (i.e., feasibility).
    \end{itemize}
    \item \textbf{Full Pass Required:} Consider all projects in order of descending utility-to-cost ratio and stop only when no further additions are feasible.
    \item \textbf{Output:} Return a JSON object formatted as follows:
\end{itemize}

\begin{center}
\texttt{
\{ "Chain-of-Thought": "",
"allocation": [] \}
}
\end{center}

}%
}
\caption{\optimization: Knapsack-Style Strategy for Project Selection}
\label{fig:knapsack_strategy}
\end{figure}

\begin{figure}[t]
\centering
\footnotesize
\noindent\fbox{%
\parbox{0.95\linewidth}{%
\begin{itemize}[itemsep=0.1em]
    \item \textbf{Objective:} Maximize total utility (voter approvals) under a strict budget constraint in a participatory budgeting setting.
    \item \textbf{Hill-Climbing Strategy:} Frame the final allocation step as a hill-climbing process. Iteratively select projects from a candidate pool that most improve the utility, guided by utility-to-cost ratio or similar heuristics. Stop when no feasible addition improves the solution within budget.
    \item \textbf{Step-wise Instructions:}
    \begin{itemize}[noitemsep]
        \item Generate 5 small, diverse subsets of projects (3–5 projects each) using distinct strategies (high utility, low cost, high U/C ratio, thematic diversity, balance).
        \item Merge all subsets into a deduplicated candidate pool.
        \item Apply a greedy, hill-climbing-like heuristic to construct the final allocation:
        \begin{itemize}[noitemsep]
            \item Sort projects by utility-to-cost ratio.
            \item Add projects one by one, updating total utility and cost.
            \item Skip any project that would exceed the budget.
        \end{itemize}
    \end{itemize}

    \item \textbf{Output:} Return a JSON object formatted as follows:
\end{itemize}

\begin{center}
\texttt{
\{
"5\_individual\_allocations": [],
"Chain-of-Thought": "",
"allocation": []
\}
}
\end{center}

}%
}
\caption{\hillclimb: Tailored "Hill-Climbing" Strategy for Project Selection}
\label{fig:hill_climbing_strategy}
\end{figure}

\subsection{Prompt-based Constrained Optimization}

Across all prompt variants, we instruct the LLM to act as a social planner in a participatory budgeting (PB) task. The goal is to select a subset of projects that maximizes total social welfare while strictly adhering to a budget constraint. The variants differ in the strategies they employ for project selection, as detailed below.

\begin{enumerate}[leftmargin=*, itemsep=0.25em]
    \item {\greedy.} The prompt explicitly directs the LLM to follow the Utilitarian Greedy (UG) algorithm verbatim. In this approach, the LLM iteratively selects the project with the highest utility-to-cost ratio, as long as it fits within the budget. Figure~\ref{fig:greedy_strategy} illustrates this prompt style.
    
    \item {\optimization.} The prompt guides the LLM to treat the task as a combinatorial optimization problem, similar to the \texttt{Knapsack} problem~\cite{yamada2002heuristic}. This strategy emphasizes calculating utility-to-cost ratios and selecting the highest feasible projects in a way that respects the budget constraint. Figure~\ref{fig:knapsack_strategy} shows this setup.
    
    \item {\hillclimb.} The prompt instructs the LLM to approach project selection as a local search procedure, inspired by {Hill Climbing}~\cite{peter2021modern}. The LLM incrementally builds an allocation by greedily adding the most utility-efficient project that fits within the remaining budget. Figure~\ref{fig:hill_climbing_strategy} demonstrates this approach.
\end{enumerate}

In all variants, the prompt requires the LLM to provide a detailed chain of thought. The model must explain the strategies used to generate subsets, evaluate how each project is assessed for selection or exclusion, and track the running total of utility and cost at each step.

\subsection{Performance and Reasoning Metrics}
\noindent\textbf{Performance Measure.} We focus on \emph{average utility}, (\texttt{AU}), a natural metric of efficiency. It is defined as $\sum_{i\in\mathbb{N}} u_i(\mathbb{W})$, where $\mathbb{W}\subseteq \mathbb{P}$ is the subset of the projects provisioned/allocated. \texttt{AU} measures how much benefit or satisfaction a community gains from the allocated projects $\mathbb{W}$. We report the \textit{normalized} \texttt{AU}, i.e., the ratio of the \texttt{AU} from an LLM's allocation with the \texttt{AU} from the UG algorithm.

\smallskip
\noindent\textbf{Reasoning Measure.} The metric of interest here is: \textit{instruction-following} (IF)~\citep{davidson2024evaluating}. IF is essential for the secure deployment of LLM agents and their ability to perform tasks efficiently. We measure \texttt{IF} based on the LLM's output, specifically, the LLM's ability to (i) format its output based on the instruction provided (formats may also change based on the PB type) and (ii) respect the PB game constraints (e.g., the LLM's allocation $\mathbb{W}$ must be within the budget $\beta$).

\section{Results}
\label{sec::results}
We now present the results for the performance and reasoning metrics outlined in Section~\ref{sec::method} for the LLMs in focus. For each PB type, \ppi\, \vrpi\, or \nlvpi\, we benchmark the LLMs on the three prompt strategies: \greedy, \optimization, and \hillclimb. For each metric, we report the mean and standard deviation across three independent runs. 
Appendix~\ref{app::more_exps} provides other details, including hyperparameters, system prompts, and \textit{sample outputs}. The code base is available at: \url{github.com/sankarshandamle/llm-pb}.

%%%%%%%%%%
%%%%%%%%%%
%%%%%%%%%%
%%%%%%%%%%
\begin{table}[t]
\caption{\textbf{Instruction-following (\texttt{IF}) Comparison:} The fraction of PB instances for which the LLM produces a consistent allocation.}
\label{tab:main_result_if}
    \begin{center}
        \begin{adjustbox}{max width=\linewidth}
            \begin{tabular}{ccccc}
            \toprule
       \multirow{1}{*}{\textbf{Method}} &    \multirow{1}{*}{\bf Model}  &  \multirow{1}{*}{\ppi} &  \multirow{1}{*}{\vrpi} &  \multirow{1}{*}{\nlvpi} \\
            \midrule
     \multirow{3}{*}{\greedy}   
        & {\llamaThreeSeventyB}  & 0.69\textsubscript{\scriptsize{0.20}} & 0.64\textsubscript{\scriptsize{0.07}} & 0.47\textsubscript{\scriptsize{0.22}} \\
        & {\llamanvidia}  & 0.89\textsubscript{\scriptsize{0.04}} & 0.78\textsubscript{\scriptsize{0.02}} & 0.42\textsubscript{\scriptsize{0.00}} \\
        & \qwen  & 0.86\textsubscript{\scriptsize{0.00}} & 0.94\textsubscript{\scriptsize{0.05}} & 0.81\textsubscript{\scriptsize{0.03}} \\
    \midrule
    \multirow{3}{*}{\optimization} 
        & {\llamaThreeSeventyB} & 0.75\textsubscript{\scriptsize{0.03}} & 0.42\textsubscript{\scriptsize{0.09}} & 0.92\textsubscript{\scriptsize{0.03}} \\
        & \llamanvidia & 0.96\textsubscript{\scriptsize{0.00}} & 0.71\textsubscript{\scriptsize{0.00}} & 0.94\textsubscript{\scriptsize{0.00}} \\
        & \qwen & 0.78\textsubscript{\scriptsize{0.03}} & 0.73\textsubscript{\scriptsize{0.02}} & 0.77\textsubscript{\scriptsize{0.02}} \\
    \midrule
    \multirow{3}{*}{\hillclimb} 
        & {\llamaThreeSeventyB} & 0.64\textsubscript{\scriptsize{0.08}} & 0.88\textsubscript{\scriptsize{0.03}} & 0.71\textsubscript{\scriptsize{0.01}} \\
        & \llamanvidia & 0.97\textsubscript{\scriptsize{0.02}} & 0.44\textsubscript{\scriptsize{0.04}} & 0.78\textsubscript{\scriptsize{0.00}} \\
        & \qwen  & 0.79\textsubscript{\scriptsize{0.06}} & 0.88\textsubscript{\scriptsize{0.09}} & 1.00\textsubscript{\scriptsize{0.00}} \\
    \midrule
    \multicolumn{2}{c}{\textbf{Average}} 
        & 0.80 & 0.72 & 0.75 \\
            \bottomrule
            \end{tabular}
        \end{adjustbox}
    \end{center}
\end{table}

\begin{table}[t]
\caption{\textbf{Normalized \texttt{AU} Comparison:} The average utility for each PB instance (normalized by the \texttt{UG} allocation) produced by the LLM.}
\label{tab:main_result_au}
    \begin{center}
        \begin{adjustbox}{max width=\linewidth}
            \begin{tabular}{ccccc}
            \toprule 
   \multirow{1}{*}{\bf Method} &         \multirow{1}{*}{\bf Model}  &  \multirow{1}{*}{\ppi} &  \multirow{1}{*}{\vrpi} &  \multirow{1}{*}{\nlvpi}  \\
            \midrule
        \multirow{3}{*}{\greedy}  
        &  {\llamaThreeSeventyB}  & 0.57\textsubscript{\scriptsize{0.05}} & {0.57\textsubscript{\scriptsize{0.06}}} & 0.46\textsubscript{\scriptsize{0.10}}   \\
        & {\llamanvidia}  & 0.55\textsubscript{\scriptsize{0.01}} & {0.62\textsubscript{\scriptsize{0.01}}} & 0.51\textsubscript{\scriptsize{0.00}}   \\
        & \qwen  & 0.59\textsubscript{\scriptsize{0.01}} & 0.57\textsubscript{\scriptsize{0.02}} & {0.66\textsubscript{\scriptsize{0.04}}}  \\
        \midrule
    \multirow{3}{*}{\optimization} 
        & {\llamaThreeSeventyB} & 0.65\textsubscript{\scriptsize{0.01}} & 0.77\textsubscript{\scriptsize{0.03}} & 0.77\textsubscript{\scriptsize{0.03}}  \\
        & \llamanvidia &  0.51\textsubscript{\scriptsize{0.00}} & 0.66\textsubscript{\scriptsize{0.00}} & 0.77\textsubscript{\scriptsize{0.00}} \\
        & \qwen & 0.90\textsubscript{\scriptsize{0.02}} & 0.82\textsubscript{\scriptsize{0.02}} & 0.79\textsubscript{\scriptsize{0.00}} \\
        \midrule
    \multirow{3}{*}{\hillclimb} 
        & {\llamaThreeSeventyB} & 0.72\textsubscript{\scriptsize{0.06}} & 0.69\textsubscript{\scriptsize{0.03}} & 0.84\textsubscript{\scriptsize{0.00}}  \\
        & \llamanvidia & 0.76\textsubscript{\scriptsize{0.02}} & 0.70\textsubscript{\scriptsize{0.00}} & 0.70\textsubscript{\scriptsize{0.00}}  \\
        & \qwen & 0.88\textsubscript{\scriptsize{0.01}} & 0.69\textsubscript{\scriptsize{0.02}} & 0.77\textsubscript{\scriptsize{0.03}} \\
        \midrule
                \multicolumn{2}{c}{\textbf{Average}} & {0.67} & {0.68} & {0.70} \\
            \bottomrule
            \end{tabular}
        \end{adjustbox}
    \end{center}
\end{table}

%%%%%%%%%%
%%%%%%%%%%
%%%%%%%%%%
%%%%%%%%%%

\smallskip
\noindent {\texttt{IF} \& \textbf{Normalized} \texttt{AU}.}
Table~\ref{tab:main_result_if} and Table~\ref{tab:main_result_au} present the Instruction-Following (\texttt{IF}) and normalized Average Utility (\texttt{AU}) performance across models and PB setups under three methods. Across the board, \qwen\ consistently exhibits strong performance. For instance, under the \greedy\ method, it achieves the highest \texttt{IF} on \vrpi\ ({0.94 $\pm$ 0.052}) and a top \texttt{AU} on \nlvpi\ ({0.655 $\pm$ 0.043}). Notably, \llamanvidia\ under \greedy\ performs best on \ppi\ with an \texttt{IF} of {0.89 $\pm$ 0.039}, while \llamaThreeSeventyB\ leads on \vrpi for \texttt{AU} with {0.573 $\pm$ 0.058}.

When equipped with optimization routines, all models see improvements in both metrics. \qwen\ under \optimization\ outperforms others with \texttt{AU} scores above 0.78 for all setups, peaking at {0.896 $\pm$ 0.021} on \ppi\. Meanwhile, \llamanvidia\ shows an \texttt{IF} above 0.94 for both \ppi\ and \nlvpi\, but under performs in \texttt{AU} on \ppi\ ({0.505 $\pm$ 0.000}).

In most cases, \hillclimb\ yields further improvements. E.g., \llamaThreeSeventyB\ achieves an \texttt{AU} of {0.842 $\pm$ 0.000} on \nlvpi, the highest among its configurations. \qwen\ also performs well with strong \texttt{IF} on \vrpi\ ({0.875 $\pm$ 0.090}) and high utility scores ({0.876 $\pm$ 0.009}) on \ppi. In summary, \qwen\ consistently offers the best trade-off between instruction-following and utility across methods. Introducing \optimization\ and \hillclimb\ significantly boosts performance, especially for weaker base models like \llamaThreeSeventyB. These findings highlight the role of inference-time reasoning in improving both the feasibility and quality of LLM-generated allocations.

\subsection{Results Takeaways}
Table~\ref{tab:main_result_if} and Table~\ref{tab:main_result_au} highlight an interesting insight into the performance of LLMs when tasked with participatory budgeting (PB). Notably, setups involving (i) natural language preferences (\nlvpi) and (ii) scenarios where the LLM infers voter preferences (\vrpi) often exhibit higher average utility (\texttt{AU}) compared to the setup where preferences are presented as numerical matrices (\ppi). For instance, Table~\ref{tab:main_result_au}, we see that the average performance for \ppi\ and \vrpi\ is similar, while it is marginally higher for \nlvpi.

In the \nlvpi\ setting, \qwen\ achieves a normalized utility of $0.655$ under the \greedy\ method and up to $0.789$ with \optimization, outperforming its corresponding results in the \ppi\ setting ($0.590$ and $0.896$, respectively). A similar trend is observed for \llamanvidia, which improves from $0.554$ (\ppi) to $0.620$ (\vrpi) under \greedy, and from $0.505$ to $0.656$ under \optimization. These patterns indicate that the model is able to reason better when dealing with richer, inferred voter preferences -- such as those derived from metadata -- than with sparse project-level inputs. Overall, the average normalized utility across all models is highest in \nlvpi\ ($0.695$), compared to \vrpi\ ($0.676$) and \ppi\ ($0.673$), highlighting the importance of fine-grained preference signals in improving LLM-based allocations.

We believe these findings are significant and suggest that LLMs are not only adept at processing natural language but may also outperform when interpreting or inferring preferences from textual inputs compared to structured preference matrices. The results indicate the potential for deploying LLMs as social planners in mechanism design, particularly in settings where explicit preferences are unavailable or challenging to represent numerically (e.g., combinatorial preferences~\citep{nisan2006bidding}).
% The ability of LLMs to process and infer preferences from textual data demonstrates their potential for solving complex allocation problems in real-world participatory decision-making, further bridging the gap between computational techniques and social applications.

%%%%%%%%%%%%%%%%%%%%%%%%%%%%%%%
%%%%%%%%%%%%%%%%%%%%%%%%%%%%%%%
%%%%%%%%%%%%%%%%%%%%%%%%%%%%%%%
\begin{figure}[t]
    \centering
        \centering
        \fbox{%
        \tiny
        \begin{minipage}[t]{0.95\columnwidth}
        \ttfamily
       \texttt{\hl{`allocation': ['5', '3', '1', '22', '12']}},
        \texttt{`Chain-of-thought': ``I chose projects 5, 3, 1, 22, and 12 because they have the highest number of votes. Project 5 has the most votes with 50, project 3 has 37 votes, project 1 has 10 votes, project 22 has 17 votes, and project 12 has 15 votes. The total cost of these projects is 239,676 + 128,805 + 77,344 + 320,000 + 50,000 = 815,825, which exceeds the budget of 500,000. To meet the budget, I initially selected the projects with the highest votes and adjusted the selection by choosing only those that fit within the total budget. \hl{Therefore, I ended up choosing projects 5, 3, and 1."}}
        \end{minipage}
        }
    \caption{\textbf{Internal (but not External) Faithfulness:} A sample output of \llamaThreeSeventyB, for \greedy\ with \ppi. While the reasoning traces a budget-adjustment process, the final allocation does not reflect the stated revision, indicating a mismatch between explanation and output.}
    \label{fig:model_responses}
\end{figure}

\paragraph{Faithfulness~\citep{jacovi2020towards}.} Faithfulness measures how accurately an LLM's reasoning explains its answers. Recall that we require LLMs to (i) produce a final allocation $\mathbb{W}$ and (ii) provide reasoning for selecting $\mathbb{W}$. In this context, faithfulness refers to situations where the LLM's allocation $\mathbb{W}$ logically follows from its reasoning. An LLM is considered \textit{internally but not externally faithful} if its final allocation $\mathbb{W}$ contradicts its reasoning~\citep{davidson2024evaluating}. 

Figure~\ref{fig:model_responses} depicts an instance of \llamaThreeSeventyB's output, which shows internally but not externally faithful behavior. While we acknowledge such nuanced reasoning behavior, our \texttt{IF} metric still marks this instance as a failure -- because in real-world deployment scenarios, only the final allocation $\mathbb{W}$ would be actioned, and if it violates the budget constraint, it cannot be accepted regardless of the LLM's internal response.

%%%%%%%%%%%%%%%%%%%%%%%%%%%%%%%
%%%%%%%%%%%%%%%%%%%%%%%%%%%%%%%
%%%%%%%%%%%%%%%%%%%%%%%%%%%%%%%
\begin{figure}[t]
    \centering
        \fbox{%
        \begin{minipage}{0.95\columnwidth}
        \tiny
        \ttfamily
        \texttt{\{'allocation': [31, 2, 23, 13, 14, 3], 'Chain-of-Thought': "  Projects: - ID: 3, Cost: 27,000, Votes: 48 - ID: 14, Cost:\
      \ 25,000, Votes: 41 - ID: 13, Cost: 24,000, Votes: 40 - ID: 31, Cost: 8,000,\
      \ Votes: 38 - ID: 2, Cost: 13,000, Votes: 39 - ID: 23, Cost: 40,000, Votes:\
      \ 36  Steps: 1. Calculate the utility-to-cost ratio for each project. 2.\
      \ Sort projects by this ratio in descending order. 3. Select projects one by\
      \ one, checking the budget constraint.   Output: 1. Instance: Knapsack\_7 \
      2. Selected Projects: [31, 2, 23, 13, 14, 3] 3. Explanation:    - Step 1:\
      \ Calculate ratios:      - Project 31: Utility = 38, Cost = 8,000, Ratio =\
      \ 38 / 8,000 \u2248 0.00475      - Project 2: Utility = 39, Cost = 13,000,\
      \ Ratio = 39 / 13,000 \u2248 0.003      - Project 23: Utility = 36, Cost =\
      \ 40,000, Ratio = 36 / 40,000 \u2248 0.0009      - Project 13: Utility = 40,\
      \ Cost = 24,000, Ratio = 40 / 24,000 \u2248 0.00167      - Project 14: Utility\
      \ = 41, Cost = 25,000, Ratio = 41 / 25,000 \u2248 0.00164      - Project 3:\
      \ Utility = 48, Cost = 27,000, Ratio = 48 / 27,000 \u2248 0.00178      - Others\
      \ have lower ratios or zero utility.    - Step 2: Sort: [31, 2, 23, 13, 14,\
      \ 3,...]    - Step 3: Add Project 31 (Cost = 8,000, Remaining Budget = 500,000\
      \ - 8,000 = 492,000).    - Step 4: Add Project 2 (Cost = 13,000, Remaining\
      \ Budget = 492,000 - 13,000 = 479,000).    - Step 5: Add Project 23 (Cost =\
      \ 40,000, Remaining Budget = 479,000 - 40,000 = 439,000).    - Step 6: Add\
      \ Project 13 (Cost = 24,000, Remaining Budget = 439,000 - 24,000 = 415,000). \
      \   - Step 7: Add Project 14 (Cost = 25,000, \hl{Remaining Budget = 415,000 - 25,000
      \ = 390,000).    - Step 8: Add Project 3 (Cost = 27,000, Remaining Budget =
      \ 390,000 - 27,000 = 363,000).    - Step 9: Next project exceeds the budget.}\
      \ Stop. 4. Return selected projects ( W = [31, 2, 23, 13, 14, 3] ).  "}
        \end{minipage}
        }
 \caption{Output of \llamaThreeSeventyB\ when presented with a PB instance of type \ppi\ with \greedy\ as the prompt strategy. The model stops prematurely, with the baseline UG allocation consuming USD 481,400 out of the USD 500,000 budget, while the model's allocation only consumes USD 137,000. Additionally, the model "loosely" follows the greedy allocation order. The actual greedy order is $[3, 14, 13, 12, 2, 31, 21, 23, 33, 41, 15, 50, 42, 43, 47, 22, 36, 5, 37, 8]$.}
    \label{fig:output_greedy_base_main}
\end{figure}

%%%%%%%%%%%%%%%%%%%%%%%%%%%%%%%
%%%%%%%%%%%%%%%%%%%%%%%%%%%%%%%
%%%%%%%%%%%%%%%%%%%%%%%%%%%%%%%

\smallskip
\noindent\greedy\ \textbf{vs.} \optimization\ \textbf{\&} \hillclimb. We attribute the relatively poorer performance of \greedy\ to the following factors: Analysis of the \texttt{Chain-of-Thought} field reveals that LLMs often execute the \texttt{UG} algorithm inconsistently and conservatively. Rather than strictly following the utility-per-cost ordering, models tend to apply a "loose" greedy strategy, omitting or reordering projects in ways that deviate from the expected behavior (refer to Figure~\ref{fig:output_greedy_base_main} for one such instance). Additionally, LLMs frequently terminate the allocation process early, leaving a significant portion of the budget unutilized. We present other model outputs in Appendix~\ref{app::sample_outputs}.

\begin{figure}
    \centering
    \includegraphics[width=0.95\linewidth]{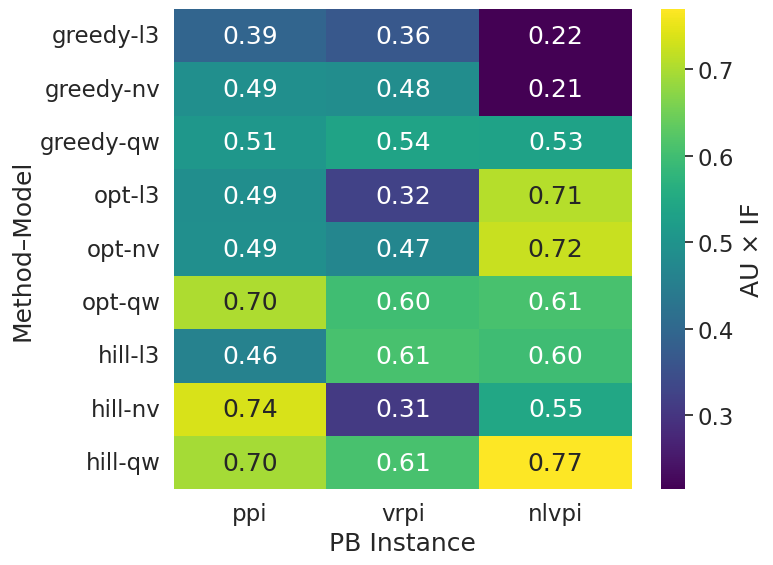}
    \caption{Illustrating the tradeoff between Instruction-following (\texttt{IF}) and Average Utility (\texttt{AU}) for various methods across three different PB instances: \ppi, \vrpi, and \nlvpi. Each cell represents the normalized value of \texttt{AU} $\times$ \texttt{IF}, highlighting the interplay between these two metrics. }
    \label{fig:trade-off}
\end{figure}

\paragraph{IF-AU Trade-off.} 
Figure~\ref{fig:trade-off} visualizes the interaction between \texttt{AU} and \texttt{IF}, using their product to quantify trade-offs between utility and consistency across LLMs. \qwen, under both \optimization\ and \hillclimb, achieves high trade-off values across PB instances -- particularly in \nlvpi\ (0.72 and 0.77) -- indicating robust alignment between utility maximization and instruction consistency. In contrast, \llamaThreeSeventyB, especially under \greedy\ and \optimization, exhibits weaker trade-offs (e.g., 0.36 and 0.22 for \texttt{greedy-l3}; 0.32 and 0.71 for \texttt{opt-l3}), highlighting less stable performance. These results suggest that optimization-based prompting paired with stronger LLMs yields more reliable LLM behavior.

\subsubsection{Ablation Study} 

We also present an ablation based on \textit{model size} (Appendix~\ref{app::subsec_add_exps}), examining two categories: (i) 7B–14B models (\llamaEight, \qwenOneFour, and \mistralSeven), and (ii) 3B models (\llamaThree\ and \qwenThree). We fix the method to \greedy, as these smaller models may find it easier to mimic the \texttt{UG} algorithm compared to executing the more complex reasoning required by \optimization\ or \hillclimb.

Generally, smaller models underperform their larger counterparts on both \texttt{AU} and \texttt{IF}. For instance, from Tables~\ref{tab:main_result_if_app} and~\ref{tab:main_result_au_app}, \llamaThree\ suffers a sharp drop of 0.321 in normalized utility (\texttt{AU}) and 0.329 in consistency (\texttt{IF}) relative to \llamaEight. Similarly, \qwenThree\ retains its \texttt{AU} score compared to \qwenOneFour, but its \texttt{IF} score declines by 0.152.

In the 7B–14B category, \mistralSeven\ achieves strong consistency on \ppi\ (0.833) but underperforms on more complex setups like \nlvpi\ (0.526). Across PB setups, \qwenOneFour\ emerges as the most consistent (avg. \texttt{IF} of 0.687), while \llamaEight\ leads on normalized utility in the \nlvpi\ setting (0.531). Notably, \texttt{AU} scores for the \ppi\ PB type are higher than for the others -- suggesting that reasoning based on structured input is easier for LLMs to interpret compared to PB types that require the LLM to predict preferences, or interpret from language-based encodings.

\subsection{Theory of Mind (ToM) Inference}
\citet{davidson2024evaluating} demonstrate that LLMs engaged in negotiation tasks, such as rent negotiations between a landlord and tenant, are capable of predicting the payoff structure of the opposing party, highlighting their ability to infer strategic preferences. This ability is referred to as \emph{Theory of Mind} (ToM) inference~\citep{Kosinski2024,premack1978does}. With the performance of LLMs for \vrpi, we see similar ability -- specifically, LLM's ability to infer voter preferences, given the project categories and the voter's metadata.

\smallskip\noindent\textbf{Fraction Overlap.} To see how effective an LLM is in correctly predicting voter preferences, we examine the overlap between the projects predicted by each LLM for individual voters and the \textit{ground truth} from the corresponding \ppi\ instances. We focus on the intersection of the top-75\% of the projects that receive the highest votes, as this metric helps assess whether the LLM accurately predicts the highest-voted projects. Appendix~\ref{app::subsec_add_exps} presents numbers for top-25\% and top-50\% of the projects.

\begin{table}[ht]
\centering
\caption{Normalized top-75\% Fraction Overlap, per method, for each LLM. We also report the Normalized Utility (AU) from Table~\ref{tab:main_result_au}.}
\label{tab:overlap_au}
\begin{adjustbox}{max width=\columnwidth}
\begin{tabular}{lccc}
\toprule
\textbf{Model} &   \textbf{Method}  & \textbf{Overlap} & \textbf{AU} \\
\midrule
\multirow{3}{*}{\llamaThreeSeventyB} &  \greedy & 0.621 & 0.573 \\
& \optimization  & 0.687 & 0.773 \\
& \hillclimb & 0.683 &  0.694 \\  
\midrule
\multirow{3}{*}{\llamanvidia }   & \greedy & 0.710 & 0.620 \\
& \optimization & 0.731 &  0.656 \\
& \hillclimb & 0.726 &  0.704 \\
\midrule
\multirow{3}{*}{\qwen }              & \greedy & 0.487 & 0.574 \\
& \optimization & 0.653 & 0.820 \\
& \hillclimb & 0.610 & 0.694 \\
\bottomrule
\end{tabular}
\end{adjustbox}
\end{table}

We compute the average top-75\% fraction overlap using the three original runs per setting to evaluate how well each LLM can identify high-utility projects based on inferred preferences from metadata. Under the \greedy\ method, \llamaThreeSeventyB\ achieves an overlap of 0.621 with normalized utility (AU) of 0.573, while \llamanvidia\ attains 0.710 (AU: 0.620), and \qwen\ 0.487 (AU: 0.574). Both \llamaThreeSeventyB\ and \llamanvidia\ exhibit notably higher overlap than \qwen, aligning with their stronger performance in the \vrpi\ setting over \ppi. This suggests that LLMs infer and reason over community preferences from metadata to produce allocations.

\subsection{Adapting LLM Evaluation with Project Conflicts}

To demonstrate the adaptability of our PB framework, we extend the setup by introducing an additional constraint -- \textit{project conflicts} -- alongside the existing budget constraint. By incorporating such constraints, we progressively increase the framework's complexity, enabling us to assess whether LLMs can generalize beyond simple numeric feasibility and adapt to more structured, nuanced allocation scenarios. This highlights the flexibility of our framework in accommodating evolving evaluation goals and reflecting the increasing complexity of real-world decision-making environments.

\subsubsection{Adding Project Conflicts\label{subsec::conflict}}
To demonstrate the potential of using LLMs as social planners, we extend the PB setting introduced in Section~\ref{sec::bg} by incorporating conflicts. Conflicting projects naturally arise in PB scenarios. For instance, two projects -- such as building a library and constructing a sports complex -- might conflict if they compete for the same land or resources, making it impossible to fund both simultaneously.

Formally, we define a set of projects $\mathbb{C} \subseteq \mathbb{P}$ as \textit{conflicting}, such that the final allocation $\mathbb{W}$ cannot include any pair of projects $a, b \in \mathbb{W}$ where $a$ and $b$ are in conflict, i.e., $a, b \in \mathbb{C}$. We also tweak the UG algorithm (Section~\ref{sec::bg} or Appendix~\ref{algo::ug}) so that it greedily selects projects that (i) respect the budget constraints and (ii) are not conflicting. Greedily, for each project $p$, we add $p$ to $\mathbb{W}$ if,
\begin{equation}\label{eqn::conflict}
    \begin{rcases}
        \texttt{cost}(\mathbb{W}) + \texttt{cost}(p) &\leq \beta \\
        \forall w \in \mathbb{W} \text{~s.t.~} p \not= w, \{p,w\} &\not\subseteq \mathbb{C}
    \end{rcases}
\end{equation}

We call this variant -- Algorithm~\ref{algo::ug} with Eq.~\ref{eqn::conflict} -- as the \emph{Modified} UG algorithm\footnotemark.

\footnotetext{We note that PB with project conflicts is analogous to the well-known Disjunctively Constrained Knapsack Problem (DCKP)~\citep{yamada2002heuristic}. The conflict set $\mathbb{C}$ can be naturally represented as the set of conflicting item pairs in DCKP. The Modified UG algorithm (Eq.~\ref{eqn::conflict}) also corresponds to the "GREEDY" algorithm discussed by \citet{yamada2002heuristic}.}

%%%%%%%%%%%%%%%%%%%%%%%%%%%%%%%
%%%%%%%%%%%%%%%%%%%%%%%%%%%%%%%
%%%%%%%%%%%%%%%%%%%%%%%%%%%%%%%

%%%%%%%%%%%%%%%%%%%%%%%%
%%%%%%%%%%%%%%%%%%%%%%%%%%%%%%%
%%%%%%%%%%%%%%%%%%%%%%%%%%%%%%%
\begin{table}[t]
\caption{\textbf{Normalized \texttt{AU} and \texttt{IF}:} The normalized \texttt{AU} (ratio of LLM's allocation to Modified UG) and \texttt{IF} scores for each PB instance with \textbf{project conflicts}. \textbf{Bold} numbers indicate the best-performing LLM.}
\label{tab:main_result_conflict}
    \begin{center}
        \begin{adjustbox}{max width=\linewidth}
            \begin{tabular}{ccccc}
            \toprule
          \multicolumn{2}{c}{\textbf{PB Setup}} & {\llamaThreeSeventyB} & {\llamanvidia}  & \qwen \\
            \midrule
     \multirow{2}{*}{\ppi}     &  \texttt{AU}  &  0.762\textsubscript{0.058}  & 0.792\textsubscript{0.000}  & \textbf{0.865\textsubscript{0.017}} \\
          &  \texttt{IF} &  0.681\textsubscript{0.020}  &  0.750\textsubscript{0.000}  & \textbf{0.778\textsubscript{0.020}} \\
    \multirow{2}{*}{\vrpi} &  \texttt{AU}  &   0.756\textsubscript{0.051}  &  0.746\textsubscript{0.000}  &  \textbf{0.865\textsubscript{0.020}} \\ 
     &  \texttt{IF}  &    0.528\textsubscript{0.071}  &  0.375\textsubscript{0.000} &  \textbf{0.681\textsubscript{0.039}} \\ 
            \bottomrule
            \end{tabular}
        \end{adjustbox}
    \end{center}
\end{table}

%%%%%%%%%%%%%%%%%%%%%%%%%%%%%%%
%%%%%%%%%%%%%%%%%%%%%%%%%%%%%%%
%%%%%%%%%%%%%%%%%%%%%%%%%%%%%%%

\paragraph{The Modified UG Algorithm's Social Welfare may not be Bounded.}  In our previous setup, the \texttt{UG} algorithm served as a challenging benchmark for LLMs to surpass, as it is optimal up to one project~\citep{dantzig1957discrete}. However, the introduction of project conflicts negatively impacts the quality of allocations produced by the greedy approach. Specifically, we observe that the social welfare under the greedy allocation (i.e., the sum of utilities, $\sum_{i\in\mathbb{V}}u_i(\mathbb{W_{\texttt{greedy}}})$) may be unbounded relative to the optimal social welfare, $\sum_{i\in\mathbb{V}}u_i(\mathbb{W_\texttt{opt}})$ (see Proposition 1 in Appendix~\ref{app::proofs}). Additionally, unlike the UG algorithm, the Modified UG algorithm is \emph{not} optimal up to one project (refer to Proposition 2 in Appendix~\ref{app::proofs}). The proofs are straightforward and follow by construction, and we defer the details to Appendix~\ref{app::proofs}.

\subsubsection{Setup and Results}

\paragraph{Setup.} To evaluate LLM performance in participatory budgeting (PB) with project conflict constraints, we augment each of the 24 PB instances by introducing a randomly sampled conflict set $\mathbb{C}$. For each project $p \in \mathbb{P}$, we include it in $\mathbb{C}$ with probability $q = 0.5$ and exclude it otherwise. The baseline average utility (\texttt{AU}) is computed using a modified version of the Utilitarian Greedy (UG) algorithm, which incorporates constraint checks via Eq.~\ref{eqn::conflict} to ensure feasibility. We use the \greedy\ method for comparison, as it allows us to isolate the impact of benchmark adaptation (e.g., introducing project conflicts) on LLM performance. This setup highlights how such adaptations influence allocation quality in a more straightforward manner. Moreover, \greedy\ is suitable in this context since, unlike the previous approach, the Modified \texttt{UG} algorithm is a less challenging algorithmic baseline for the LLM to surpass (refer to Section~\ref{subsec::conflict}).

\paragraph{Results.} Table~\ref{tab:main_result_conflict} presents the normalized \texttt{AU} and instruction-following (\texttt{IF}) scores for the three open-source LLMs. Here, \texttt{IF} includes (i) correct output format, (ii) budget feasibility, and (iii) non-conflicting allocation. That is, \texttt{IF} is strictly tougher for LLMs to respect here than the previous setup. Compared to the results in Table~\ref{tab:main_result_au}, the LLMs' average utility is now closer to the baseline. Figure~\ref{fig:project_conflict_instances} highlights a few instances where the LLM outperforms the baseline allocation. Notably, \qwen\ achieves the highest performance, with normalized \texttt{AU} and \texttt{IF} scores of 0.865 and 0.778, respectively -- consistent with its top-rank in recent evaluations~\citep{openlmleaderboard}.

Our results indicate LLMs' potential as effective decision-makers in complex, preference-driven environments where traditional mechanism design may struggle to represent or collect preferences.

\section{Conclusion}

In this paper, we demonstrate that PB mechanisms can serve as a dynamic benchmark for evaluating the reasoning and decision-making abilities of LLMs. We assess state-of-the-art LLMs based on their performance on real-world PB instances, emphasizing their efficacy and instruction-following ability. We compare performance across different instance types to also showcase LLMs' adaptability in providing viable solutions in setups with natural language or omitted preferences. Our work underscores the value of mechanism design as a benchmark for LLM evaluation.

\paragraph{Limitations \& Future Work.}
Here we discuss limitations and directions for future work. First, the prompts used may not be fully optimized, which could affect model performance. Similarly, the \texttt{NLVPI} representation may not be fully refined, meaning performance could improve with better representations. Second, our experiments are constrained by the limited external metadata available, specifically age, sex, and education of voters. More fine-grained voter information could potentially improve LLMs' ability to better infer preferences, leading to more accurate predictions and improving the allocation it outputs. Third, our experiments are also limited by the availability and sample size of real-world PB instances. Lastly, our focus on a single framework (PB) may not capture the full complexity of decision-making systems. Future work could explore additional datasets, scenarios, and mechanisms to evaluate LLMs.

% Bibliography entries for the entire Anthology, followed by custom entries
%\bibliography{anthology,custom}
% Custom bibliography entries only
\bibliography{ref}

\newpage
\appendix

\section{\textsc{Pabulib}: \underline{PA}rticipatory \underline{BU}dgeting \underline{LIB}rary}
\setcounter{figure}{0}
\setcounter{table}{0}
\renewcommand{\thefigure}{\thesection.\arabic{figure}}
\renewcommand{\thetable}{\thesection.\arabic{table}}

\label{app::pabulib}

\paragraph{PB Instance.} A typical PB instance is a single UTF-8 encoded text file with a "\texttt{.pb}" extension, divided into three sections. The \texttt{META} section includes general metadata, such as the country, budget, and the number of votes. The \texttt{PROJECTS} section details project costs, categories, and targets. The \texttt{VOTES} section records the votes, which can be of four types -- approval, ordinal, cumulative, or scoring -- and may also include voter metadata, such as age and sex. 

Figure~\ref{fig::pb_instance} depicts a simple \textsc{Pabulib} instance. Each section begins with a title (\texttt{META}, \texttt{PROJECTS}, or  \texttt{VOTES} ), followed by a semicolon-separated column header line (e.g., \texttt{key;value} for \texttt{META}), and subsequent lines containing the corresponding data.

\paragraph{Utilitarian-Greedy (UG) Algorithm}~\citep{rey2023computational}. The UG algorithm aims to maximize the total utility of selected projects within a budget constraint. It starts with an empty set \( \mathbb{W} \) and iteratively adds the project \( p \) that maximizes the utility-to-cost ratio \( \sum_{i \in \mathbb{V}} \frac{u_i(p)}{\texttt{cost}(p)} \), provided that the total cost remains within the budget \( \beta \). The process continues until no more projects can be added without exceeding the budget. This greedy approach is optimal for selecting up to one project for maximizing overall utility~\citep{dantzig1957discrete}. Algorithm~\ref{algo::ug} provides the formal description.

\begin{algorithm}
\renewcommand{\thealgorithm}{\thesection.\arabic{algorithm}}
\caption{Utilitarian Greedy (UG) Algorithm}\label{algo::ug}
\begin{algorithmic}[1]
\STATE Initialize $\mathbb{W} \gets \emptyset$ (an empty set of projects)
\WHILE{there exists a project $p$ such that $\texttt{cost}(W) + \texttt{cost}(p) \leq b$}
    \STATE Select project $p$ maximizing $\sum_{i \in \mathbb{V}} \frac{u_i(p)}{\texttt{cost}(p)}$
    \STATE Add project $p$ to $W$, i.e., $\mathbb{W} \gets \mathbb{W} \cup \{p\}$
\ENDWHILE
\STATE \textbf{Return} $\mathbb{W}$
\end{algorithmic}
\end{algorithm}

\begin{figure*}
    \renewcommand{\thefigure}{\thesection.\arabic{figure}}
    \centering
    \fbox{%
        \begin{minipage}{0.75\textwidth}
        \scriptsize
        \texttt{META\\
        key; value\\
        description; Municipal PB in Wieliczka\\
        country; Poland\\
        unit; Wieliczka\\
        instance; 2020\\
        num\_projects; 5\\
        num\_votes; 10\\
        budget; 2500\\
        rule; greedy\\
        vote\_type; approval\\
        min\_length; 1\\
        max\_length; 3\\
        date\_begin; 16.09.2023\\
        date\_end; 04.10.2023\\
        \\    
        PROJECTS\\
        project\_id; cost; category\\
        1; 600; culture, education\\
        2; 800; sport\\
        4; 1400; culture\\
        5; 1000; health, sport\\
        7; 1200; education\\
        \\
        VOTES\\
        voter\_id; age; sex; vote\\
        1; 34; f; 1,2,4\\
        2; 51; m; 1,2\\
        3; 23; m; 2,4,5\\
        4; 19; f; 5,7\\
        5; 62; f; 1,4,7\\
        6; 54; m; 1,7\\
        7; 49; m; 5\\
        8; 27; f; 4\\
        9; 39; f; 2,4,5\\
        10; 44; m; 4,5\\
        }
        \end{minipage}
    }
    \caption{A Simple \textsc{Pabulib} Instance Example~\citep{faliszewski2023participatory}}\label{fig::pb_instance}
\end{figure*}

%%%%%%%%%%%
%%%%%%%%%%%%\
\begin{figure*}
    \renewcommand{\thefigure}{\thesection.\arabic{figure}}
    \centering
    \fbox{%
        \begin{minipage}{0.75\textwidth}
        \scriptsize
        \texttt{META\\
key;value\\
description;Experimental PB on Mechanical Turk | Ranking\_value\_money\_3\\
country;Worldwide\\
unit;Mechanical Turk\\
instance;Ranking\_value\_money\_3\\
num\_projects;10\\
num\_votes;75\\
budget;500000\\
vote\_type;ordinal\\
date\_begin;2022\\
date\_end;2022\\
min\_length;10\\
max\_length;10\\
edition;1\\
language;en\\
currency;USD\\
experimental;1\\
acknowledgments;The dataset was created in an experiment as part of the paper "Participatory Budgeting Design for the Real World" by Roy Fairsetin, Gerdus Benade and Kobi Gal.\\
PROJECTS\\
project\_id;cost;votes;score;name;category;description\\
3;27000;75;558;Computers for the community learning center;Culture \& community;Funding 20 laptops including mice and keyboards, giving students a place to study. \\
25;90000;75;524;The Sustainable Energy Pilot;Environment, public health \& safety;Install energy conversion devices on gym equipment and a rapid electric vehicle charging station.\\
13;24000;75;498;Real-Time Bus Arrival Monitors in bus stations;Streets, Sidewalks \& Transit;Real-time bus arrival monitors bus stops will inform travelers when the next bus will arrive, so they can adjust their plans if needed.\\
51;105000;75;462;Security Cameras;Education;Install security cameras in public schools.\\
22;320000;75;426;24H public toilet;Environment, public health \& safety;24-hour access public toilet near Central Square.\\
7;50000;75;382;Laundry Access in Public Schools;Culture \& community;Renovate a space in a Cambridge Public School and install washers and dryers for students who do not have easy access to laundry services at home, to use for their clothing and necessities.\\
40;120000;75;363;Let’s Rest: Picnic Tables \& Benches for Our Parks;Facilities, parks \& recreation;Benches and picnic tables bring our community together. Installing new benches and picnic tables in up to 10 of our park will allow people of all ages and abilities to enjoy them for resting, talking, reading, people watching and being outdoors (3 of which are shown in the map).\\
16;90000;75;340;Sheltered Bike Parking at the Main Library;Streets, Sidewalks \& Transit;The Main Library needs more bicycle parking. A glass pavilion, protecting bikes from the weather, landscaped with paths and trees, will be an attractive and functional addition to the library grounds.\\
45;250000;75;308;Installing Lights at the school Basketball Court;Education;Install lighting to extend safe playing hours for basketball courts. Increases safety for community members while expanding healthy alternatives for youth and access to public space.\\
34;250000;75;264;Dog Park;Facilities, parks \& recreation;Building a dog park.\\
VOTES\\
voter\_id;vote;age;sex;education\\
1025;51,3,13,22,7,25,40,34,45,16;30;F;Graduate degree\\
1026;51,25,3,13,16,22,45,40,7,34;44;M;Graduate degree\\
1027;3,13,22,40,25,7,16,34,45,51;30;M;Graduate degree\\
1029;51,40,7,45,13,3,22,16,25,34;65;F;Graduate degree\\
1030;51,7,25,45,40,22,3,16,34,13;39;M;College\\
2313;22,3,51,25,16,7,40,13,45,34;65;M;Graduate degree\\
2441;45,13,25,40,22,51,3,7,16,34;35;M;Graduate degree\\
1035;22,3,13,40,16,34,45,51,25,7;32;M;Graduate degree\\
2495;3,13,25,22,40,16,7,51,45,34;50;M;High School/GED\\
1037;13,34,45,40,16,22,25,3,51,7;46;F;Graduate degree\\
2189;22,45,34,51,25,16,7,3,13,40;37;M;College\\
1680;3,40,25,13,7,45,34,22,16,51;58;M;College\\
1042;3,22,25,40,13,7,51,34,45,16;31;M;Graduate degree\\
1043;22,3,16,25,13,51,45,7,40,34;31;M;Graduate degree\\
2066;7,13,40,22,3,16,25,45,51,34;54;M;Graduate degree\\
1817;3,13,40,25,34,22,51,16,45,7;70;F;Graduate degree\\
2462;13,51,25,34,3,22,45,40,7,16;38;F;College\\
929;25,3,7,13,51,16,45,34,40,22;39;M;Graduate degree\\
2337;25,22,3,7,34,40,51,13,16,45;27;M;College\\
2211;13,3,7,25,16,45,40,34,51,22;32;M;College\\
1788;3,7,13,25,16,22,45,40,34,51;24;M;College\\
1578;51,22,25,13,3,7,45,16,40,34;38;F;Graduate degree\\
2093;51,45,3,22,40,25,13,7,16,34;38;F;College\\
1597;34,13,22,45,40,51,16,25,7,3;57;F;College\\
2365;22,3,7,45,34,40,16,51,25,13;33;F;High School/GED\\
959;3,25,51,13,16,7,40,34,22,45;32;M;Graduate degree\\
960;40,3,22,34,51,25,16,13,7,45;23;M;High School/GED\\
1472;25,22,34,16,13,45,51,7,3,40;28;M;Graduate degree\\
1473;25,13,3,40,16,7,22,45,51,34;50;M;Graduate degree\\
1474;13,3,40,16,34,22,45,51,7,25;45;M;Graduate degree\\
1475;13,34,22,25,51,3,45,7,16,40;18;M;High School/GED\\
1477;3,13,7,25,16,45,22,51,40,34;26;M;High School/GED\\
1478;25,13,3,51,7,22,40,34,16,45;24;M;College\\
1479;25,34,51,3,40,13,7,45,16,22;24;M;High School/GED\\
1480;16,13,25,51,22,3,45,7,40,34;21;M;High School/GED\\
\textcolor{blue}{(other voters omitted)}
}
        \end{minipage}
    }
    \caption{An Instance among the 24 \textsc{Pabulib} Mechanical Turk Instances~\citep{faliszewski2023participatory}}\label{fig::pb_instance_turk}
\end{figure*}
%%%%%%%%%%%%%
%%%%%%%%%%%%

\paragraph{Sample \textsc{Pabulib} Mechanical Turk PB Instance.} Figure~\ref{fig::pb_instance_turk} presents an instance from the \textsc{Pabulib} Mechanical Turk dataset, which includes details on projects and votes in a participatory budgeting experiment. It showcases 10 projects with varying costs and descriptions, along with 75 voter responses, including demographics such as age, sex, and education level.

\section{Additional Details and Experiments}
\label{app::more_exps}
\setcounter{figure}{0}
\setcounter{table}{0}
\renewcommand{\thefigure}{\thesection.\arabic{figure}}
\renewcommand{\thetable}{\thesection.\arabic{table}}

\subsection{Hyperparameters}
\paragraph{Hugging Face Text Generation Pipeline}~\citep{huggingface_transformers_pipelines}. For all open-source models, we use the \emph{Hugging Face} text generation pipeline~\citep{huggingface_transformers_pipelines} with default hyperparameters for decoding. Specifically, the decoding strategy employs greedy decoding, where the model selects the highest-probability token at each step, minimizing randomness in the generated output. Additionally, the default setup implies that only a single response is generated for each input. This approach minimizes randomness in LLM's output.

\paragraph{Open AI Text Generation API}~\citep{OpenAI_Text_Generation_Guide}. For OpenAI models, we utilized the OpenAI Text Generation API via the \texttt{ChatCompletion} endpoint, specifying the model name, message history, and a maximum completion length of $150$ tokens. As noted in the main paper, we did not prompt OpenAI models to explain their allocation to reduce costs. We set \texttt{top\_p} to 1, disabling nucleus sampling, and \texttt{n} to 1, ensuring a single response. We used OpenAI’s default greedy decoding strategy to minimize randomness, which selects the highest-probability tokens at each step.

Future work can explore the impact of different decoding strategies on an LLM's performance and instruction-following ability.

\subsection{System Prompts}
Our system prompts for few-shot prompting of LLMs vary depending on the type of PB instance and setting. For the standard PB instance, \texttt{PPI}, and its natural language variant, \texttt{NLVPI}, the system prompt outlines the setup, the utility score, and the UG algorithm. For \texttt{VRPI}, where LLMs must predict votes, the prompt directs the model to focus on voter metadata to infer preferences and requires the output to be provided as a \texttt{json} object. In the case of PB with project conflicts, the system shows an example of conflicting projects and asks the LLM to produce a utility-maximizing allocation, without explicitly guiding it to follow a greedy approach. Detailed system prompts for each scenario are available in Appendix~\ref{app::system_prompts}.

\subsection{Additional Experiments\label{app::subsec_add_exps}}

\subsubsection{Ablations}
We ablate our findings on \emph{model size}.

\paragraph{Model Size: 7B-14B.}
 We conduct experiments on comparatively smaller instruction fine-tuned LLMs, including \llamaEight, \qwenOneFour, and \mistralSeven.

%%%%
%%%%%
\begin{table*}[t]
\caption{\texttt{IF} Comparison: The fraction of PB instances for which the LLM produces a consistent allocation. The numbers in \textbf{bold} highlight the highest fraction, for each LLM.}
\label{tab:main_result_if_app}
    \begin{center}
        \begin{adjustbox}{max width=0.9\linewidth}
            \begin{tabular}{ccccc}
            \toprule 
            \multicolumn{1}{c}{\bf Setup}  & \llamaEight & \qwenOneFour & \mistralSeven & \textbf{Average} (across Models) \\
            \midrule
            \texttt{PPI}  & \textbf{0.694 $\pm$ 0.079} &\textbf{ 0.750 $\pm$ 0.034} & 0.833 $\pm$ 0.00 &  0.759 \\
            \texttt{VRPI} &  0.569 $\pm$ 0.071 & \textbf{0.750 $\pm$ 0.034} & 0.625 $\pm$ 0.00 & 0.648 \\
            \texttt{NLVPI} &  0.596 $\pm$ 0.099  & 0.561 $\pm$ 0.066 & 0.526 $\pm$ 0.00 & 0.561\\
            \textbf{Average} (across Setup) &  0.620  &  0.687   &  0.661 & -- \\
            \bottomrule
           \end{tabular}
        \end{adjustbox}
    \end{center}
\end{table*}

\begin{table*}[t]
\caption{Normalized \texttt{AU} Comparison: The average utility for each PB instance (normalized by the UG allocation) produced by the LLM. The numbers in \textbf{bold} highlight the highest normalized utility, for each LLM.}
\label{tab:main_result_au_app}
    \begin{center}
        \begin{adjustbox}{max width=0.9\linewidth}
            \begin{tabular}{ccccc}
            \toprule 
            \multicolumn{1}{c}{\bf Setup}  & \llamaEight & \qwenOneFour & \mistralSeven & \textbf{Average}  (across Models) \\
            \midrule
            \texttt{PPI}  & 0.520 $\pm$ 0.011 & \textbf{0.633 $\pm$  0.042} & \textbf{0.439 $\pm$ 0.00} & 0.531 \\
            \texttt{VRPI} & 0.423 $\pm$ 0.026 & 0.579 $\pm$ 0.029 & 0.388 $\pm$ 0.00 & 0.463  \\
            \texttt{NLVPI} &  \textbf{0.531 $\pm$ 0.055} & 0.587 $\pm$ 0.030 & 0.394 $\pm$ 0.00 & 0.504\\
            \textbf{Average} (across Setup) &   0.491 &  0.600   &   0.407 & -- \\
            \bottomrule
           \end{tabular}
        \end{adjustbox}
    \end{center}
\end{table*}

%%%%%
%%%%%

As shown in Tables~\ref{tab:main_result_if_app} and~\ref{tab:main_result_if_app}, these models generally exhibit lower \texttt{IF} and \texttt{AU} scores compared to their larger counterparts. Notably, \qwenOneFour\ demonstrates relatively stronger performance in terms of both instruction-following and utility, achieving the highest \texttt{IF} in \texttt{VRPI} and \texttt{PPI} settings, as well as the best \texttt{AU} in \texttt{PPI} and \texttt{VRPI}. Interestingly, these relatively smaller models struggle with inferring voter preferences, as indicated by their lower \texttt{AU} in \texttt{VRPI} and \texttt{NLVPI}, they achieve competitive performance in \texttt{PPI}, suggesting that they find it easier to process explicit voter preferences rather than infer them from natural language descriptions.

\paragraph{Model Size: 3B.}
In this set of experiments, we focus on the performance of smaller 3B models, specifically \llamaThree\ and \qwenThree. As shown in Table~\ref{tab:main_result_if_app_2} and Table~\ref{tab:main_result_au_app_2}, \qwenThree\ outperforms \llamaThree\ in terms of consistency, achieving a higher fraction of consistent allocations in all PB setups. The average fraction across setups for \qwenThree\ is 0.639 for \texttt{IF}, compared to 0.236 for \llamaThree. Similarly, \qwenThree\ exhibits higher normalized utility scores across all PB types, particularly in the \texttt{PPI} setup, where it reaches 0.655, compared to 0.249 for \llamaThree. The average normalized utility across setups for \qwenThree\ is 0.605, significantly higher than \llamaThree\ at 0.268. These results highlight the comparative strength of \qwenThree\ over \llamaThree\ in both consistency and utility. 

%%%%%
\begin{table*}[t]
\caption{\texttt{IF} Comparison: The fraction of PB instances for which the LLM (i) follows the intended output format and (ii) produces an allocation within the budget.  The numbers in \textbf{bold} highlight the highest fraction, for each LLM.}
\label{tab:main_result_if_app_2}
    \begin{center}
        \begin{adjustbox}{max width=0.8\linewidth}
            \begin{tabular}{ccccc}
            \toprule 
            \multicolumn{1}{c}{\bf Setup}  & \llamaThree & \qwenThree & \textbf{Average} (across Models) \\
            \midrule
            \texttt{PPI}  & 0.361 $\pm$ 0.030 &  0.708 $\pm$ 0.102 &  0.535  \\
            \texttt{VRPI} & 0.125 $\pm$ 0.068 &  0.639 $\pm$ 0.020 &  0.382   \\
            \texttt{NLVPI} &  0.222 $\pm$ 0.020 & 0.569 $\pm$ 0.039 & 0.396   \\
            \textbf{Average} (across Setup) & 0.236 & 0.639 &  --   \\
            \bottomrule
           \end{tabular}
        \end{adjustbox}
    \end{center}
\end{table*}

\begin{table*}[t]
\caption{Normalized \texttt{AU} Comparison: The average utility for each PB instance (normalized by the UG allocation) produced by the LLM. The numbers in \textbf{bold} highlight the highest normalized utility, for each LLM.}
\label{tab:main_result_au_app_2}
    \begin{center}
        \begin{adjustbox}{max width=0.8\linewidth}
            \begin{tabular}{cccc}
            \toprule 
            \multicolumn{1}{c}{\bf Setup}  & \llamaThree & \qwenThree & \textbf{Average}  (across Models) \\
            \midrule
            \texttt{PPI}  & 0.249 $\pm$ 0.046 & 0.655 $\pm$ 0.016 &  0.452  \\
            \texttt{VRPI} & 0.257 $\pm$ 0.108 & 0.625 $\pm$ 0.079 & 0.441   \\
            \texttt{NLVPI} & 0.298 $\pm$ 0.131 & 0.535 $\pm$ 0.069 &   0.417 \\
            \textbf{Average} (across Setup) & 0.268 & 0.605 &  --  \\
            \bottomrule
           \end{tabular}
        \end{adjustbox}
    \end{center}
\end{table*}

%%%%%

\subsubsection{Fraction Overlap}
We also look at the PB instance-wise mean Fraction overlap with \texttt{PPI} as ground truth for the top 25\% and 75\% highest-voted projects.

\begin{table}[ht]
\centering
\caption{Normalized top-25\% Fraction Overlap, per method, for each LLM. We also report the Normalized Utility (AU) from Table~\ref{tab:main_result_au}.}
\label{tab:overlap_au_2}
\begin{adjustbox}{max width=\columnwidth}
\begin{tabular}{lccc}
\toprule
\textbf{Model} &   \textbf{Method}  & \textbf{Overlap} & \textbf{AU} \\
\midrule
\multirow{3}{*}{\llamaThreeSeventyB} &  \greedy & 0.342 & 0.573 \\
& \optimization  & 0.319  & 0.773 \\
& \hillclimb & 0.361  &  0.694 \\  
\midrule
\multirow{3}{*}{\llamanvidia }   & \greedy & 0.344 & 0.620 \\
& \optimization & 0.324 &  0.656 \\
& \hillclimb & 0.305 &  0.704 \\
\midrule
\multirow{3}{*}{\qwen }              & \greedy & 0.365 & 0.574 \\
& \optimization & 0.453 & 0.820 \\
& \hillclimb & 0.472 & 0.694 \\
\bottomrule
\end{tabular}
\end{adjustbox}
\end{table}

\begin{table}[ht]
\centering
\caption{Normalized top-50\% Fraction Overlap, per method, for each LLM. We also report the Normalized Utility (AU) from Table~\ref{tab:main_result_au}.}
\label{tab:overlap_au_3}
\begin{adjustbox}{max width=\columnwidth}
\begin{tabular}{lccc}
\toprule
\textbf{Model} &   \textbf{Method}  & \textbf{Overlap} & \textbf{AU} \\
\midrule
\multirow{3}{*}{\llamaThreeSeventyB} &  \greedy & 0.582 & 0.573 \\
& \optimization  & 0.547 & 0.773 \\
& \hillclimb &  0.602&  0.694 \\  
\midrule
\multirow{3}{*}{\llamanvidia }   & \greedy & 0.583 & 0.620 \\
& \optimization & 0.576 &  0.656 \\
& \hillclimb & 0.581 &  0.704 \\
\midrule
\multirow{3}{*}{\qwen }              & \greedy & 0.445 & 0.574 \\
& \optimization & 0.617 & 0.820 \\
& \hillclimb & 0.594 & 0.694 \\
\bottomrule
\end{tabular}
\end{adjustbox}
\end{table}

\paragraph{Top-25\%: }As shown in Table~\ref{tab:overlap_au_2}, for the top 25\% overlap, \llamaThreeSeventyB\ achieves 0.361 with a corresponding \texttt{AU} of 0.694. \llamanvidia\ reaches an overlap of 0.305 and \texttt{AU} of 0.704. \qwen\ outperforms both, with the highest overlap of 0.472 and a matching \texttt{AU} of 0.694.

\paragraph{Top-50\%:} In Table~\ref{tab:overlap_au_3}, for the top 50\% overlap, \llamaThreeSeventyB\ records 0.602 with an \texttt{AU} of 0.694. \llamanvidia\ shows a comparable overlap of 0.581 and the same \texttt{AU}. Again, \qwen\ achieves the highest overlap of 0.594, maintaining an \texttt{AU} of 0.694.

\subsubsection{PB with Project Conflicts}

%%%%%%%%%%%%%%%%%%%%%%%%%%%%%%%
%%%%%%%%%%%%%%%%%%%%%%%%%%%%%%%
%%%%%%%%%%%%%%%%%%%%%%%%%%%%%%%
\begin{figure*}[t]
    \centering
    \renewcommand{\thefigure}{\thesection.\arabic{figure}}
    \includegraphics[width=\linewidth]{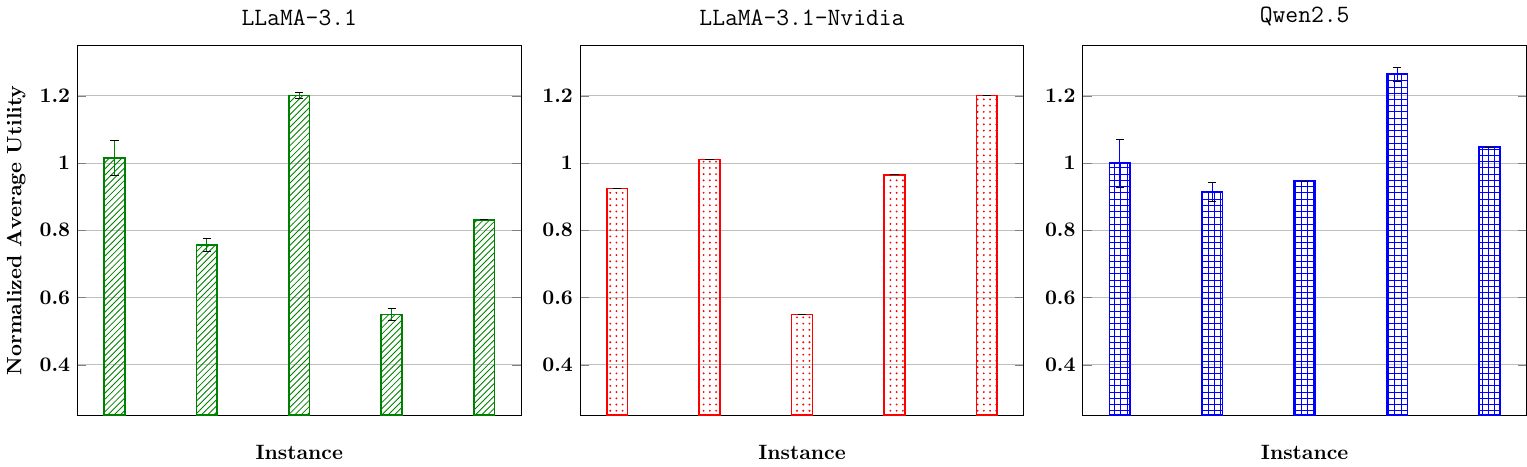}
    \caption{Normalized Average Utility (AU) for five sampled Participatory Budgeting (PB) instances with project conflicts. The plot compares the performance of different models (\texttt{LLaMA-3.1}, \texttt{LLaMA-3.1-Nvidia}, and \texttt{Qwen2.5}) across multiple instances. Instances showing AU values greater than 1 indicate better performance than the Modified UG algorithm.}
    \label{fig:project_conflict_instances}
\end{figure*}
%%%%%%%%%%%%%%%%%%%%%%%%%%%%%%%
%%%%%%%%%%%%%%%%%%%%%%%%%%%%%%%
%%%%%%%%%%%%%%%%%%%%%%%%%%%%%%%
%%%%%%%%%%%%%%%%%%%%%%%%%%%%%%%
Figure \ref{fig:project_conflict_instances} presents the normalized average utility (AU) for five sampled participatory budgeting (PB) instances with project conflicts across three different models: \texttt{LLaMA-3.1}, \texttt{LLaMA-3.1-Nvidia}, and \texttt{Qwen2.5}. Each model's performance is visualized with error bars, showing the variance in AU across instances. \texttt{LLaMA-3.1-Nvidia} consistently achieves the highest utility, followed by \texttt{Qwen2.5} and \texttt{LLaMA-3.1}, suggesting variations in model efficiency when handling project conflicts in PB contexts. Notably, in some instances, the AU exceeds 1, indicating that these models outperform the modified UG algorithm in terms of utility.

\subsubsection{Sample Outputs\label{app::sample_outputs}}

%%%%%%%%%%%%%%%%%%%%%%%%%%%%%%%
%%%%%%%%%%%%%%%%%%%%%%%%%%%%%%%
%%%%%%%%%%%%%%%%%%%%%%%%%%%%%%%
%%%%%%%%%%%%%%%%%%%%%%%%%%%%%%%
%%%%%%%%%%%%%%%%%%%%%%%%%%%%%%%
\begin{figure*}[h]
    \centering
        \fbox{%
        \begin{minipage}{0.9\textwidth}
        \scriptsize
        \ttfamily
        \texttt{\{'allocation': [31, 2, 23, 13, 14, 3], 'explanation': "  Projects: - ID: 3, Cost: 27,000, Votes: 48 - ID: 14, Cost:\
      \ 25,000, Votes: 41 - ID: 13, Cost: 24,000, Votes: 40 - ID: 31, Cost: 8,000,\
      \ Votes: 38 - ID: 2, Cost: 13,000, Votes: 39 - ID: 23, Cost: 40,000, Votes:\
      \ 36  Steps: 1. Calculate the utility-to-cost ratio for each project. 2.\
      \ Sort projects by this ratio in descending order. 3. Select projects one by\
      \ one, checking the budget constraint.   Output: 1. Instance: Knapsack\_7 \
      2. Selected Projects: [31, 2, 23, 13, 14, 3] 3. Explanation:    - Step 1:\
      \ Calculate ratios:      - Project 31: Utility = 38, Cost = 8,000, Ratio =\
      \ 38 / 8,000 \u2248 0.00475      - Project 2: Utility = 39, Cost = 13,000,\
      \ Ratio = 39 / 13,000 \u2248 0.003      - Project 23: Utility = 36, Cost =\
      \ 40,000, Ratio = 36 / 40,000 \u2248 0.0009      - Project 13: Utility = 40,\
      \ Cost = 24,000, Ratio = 40 / 24,000 \u2248 0.00167      - Project 14: Utility\
      \ = 41, Cost = 25,000, Ratio = 41 / 25,000 \u2248 0.00164      - Project 3:\
      \ Utility = 48, Cost = 27,000, Ratio = 48 / 27,000 \u2248 0.00178      - Others\
      \ have lower ratios or zero utility.    - Step 2: Sort: [31, 2, 23, 13, 14,\
      \ 3,...]    - Step 3: Add Project 31 (Cost = 8,000, Remaining Budget = 500,000\
      \ - 8,000 = 492,000).    - Step 4: Add Project 2 (Cost = 13,000, Remaining\
      \ Budget = 492,000 - 13,000 = 479,000).    - Step 5: Add Project 23 (Cost =\
      \ 40,000, Remaining Budget = 479,000 - 40,000 = 439,000).    - Step 6: Add\
      \ Project 13 (Cost = 24,000, Remaining Budget = 439,000 - 24,000 = 415,000). \
      \   - Step 7: Add Project 14 (Cost = 25,000, \hl{Remaining Budget = 415,000 - 25,000
      \ = 390,000).    - Step 8: Add Project 3 (Cost = 27,000, Remaining Budget =
      \ 390,000 - 27,000 = 363,000).    - Step 9: Next project exceeds the budget.}\
      \ Stop. 4. Return selected projects ( W = [31, 2, 23, 13, 14, 3] ).  "}
        \end{minipage}
        }
 \caption{Output of \llamaThreeSeventyB\ when presented with a PB instance of type \texttt{PPI} and tasked with executing the UG algorithm. The model stops prematurely, with the baseline UG allocation consuming USD 481,400 out of the USD 500,000 budget while the model's allocation only consumes USD 137,000. Additionally, the model "loosely" follows the greedy allocation order. The actual greedy order is $[3, 14, 13, 12, 2, 31, 21, 23, 33, 41, 15, 50, 42, 43, 47, 22, 36, 5, 37, 8]$.}
    \label{fig:output_greedy_base}
\end{figure*}

%%%%%%%%%%%%%%%%%%%%%%%%%%%%%%%
%%%%%%%%%%%%%%%%%%%%%%%%%%%%%%%
%%%%%%%%%%%%%%%%%%%%%%%%%%%%%%%
%%%%%%%%%%%%%%%%%%%%%%%%%%%%%%%
%%%%%%%%%%%%%%%%%%%%%%%%%%%%%%%

%%%%%%%%%%%%%%%%%%%%%%%%%%%%%%%
%%%%%%%%%%%%%%%%%%%%%%%%%%%%%%%
%%%%%%%%%%%%%%%%%%%%%%%%%%%%%%%
%%%%%%%%%%%%%%%%%%%%%%%%%%%%%%%
%%%%%%%%%%%%%%%%%%%%%%%%%%%%%%%
\begin{figure*}[h]
    \centering
        \fbox{%
        \begin{minipage}{\textwidth}
        \scriptsize
        \ttfamily
        \texttt{\{'allocation': [19, 26, 4, 51, 30], 'explanation':  "\#\#\# Step 1: Calculate the utility-to-cost ratio for each project.\
      \  - Project 19: Utility = 50, Cost = 40,000, Ratio = 50 / 40,000 = 0.00125\
      \  - Project 26: Utility = 43, Cost = 20,000, Ratio = 43 / 20,000 = 0.00215\
      \  - Project 4: Utility = 42, Cost = 60,000, Ratio = 42 / 60,000 = 0.0007  \
      - Project 51: Utility = 43, Cost = 105,000, Ratio = 43 / 105,000 \u2248 0.00041\
      \  - Project 30: Utility = 51, Cost = 250,000, Ratio = 51 / 250,000 \u2248\
      \ 0.000204  - Other projects have lower ratios.   Step 2: Sort projects by\
      \ this ratio in descending order: [26, 19, 4, 51, 30,...]   Step 3: Select\
      \ projects one by one, checking the budget constraint:  - Add Project 26 (Cost\
      \ = 20,000, Remaining Budget = 500,000 - 20,000 = 480,000).  - Add Project\
      \ 19 (Cost = 40,000, Remaining Budget = 480,000 - 40,000 = 440,000).  - Add\
      \ Project 4 (Cost = 60,000, Remaining Budget = 440,000 - 60,000 = 380,000).\
      \  \hl{- Add Project 51 (Cost = 105,000, Remaining Budget = 380,000 - 105,000 =
      275,000).  - Add Project 30 (Cost = 250,000, Remaining Budget = 275,000 -\
      \ 250,000 = 25,000).  - Next project exceeds the budget. Stop.}   Return selected\
      \ projects (W = [19, 26, 4, 51, 30]). \#\#\#"
        }
        \end{minipage}
    }
 \caption{Output of \qwen\ when presented with a PB instance of type \texttt{PPI} and tasked with executing the UG algorithm. The model does not stop prematurely, with the baseline UG algorithm's allocation \textbf{being identical} to the model's allocation. However, the greedy order is still "loose". The actual greedy order is $[30, 19, 35, 48, 26, 51, 4, 20, 38, 7]$.}
    \label{fig:output_greedy_good}
\end{figure*}

%%%%%%%%%%%%%%%%%%%%%%%%%%%%%%%
%%%%%%%%%%%%%%%%%%%%%%%%%%%%%%%
%%%%%%%%%%%%%%%%%%%%%%%%%%%%%%%
%%%%%%%%%%%%%%%%%%%%%%%%%%%%%%%
%%%%%%%%%%%%%%%%%%%%%%%%%%%%%%%

Figures \ref{fig:output_greedy_base} and Figure~\ref{fig:output_greedy_good} compare the UG allocations produced by \llamaThreeSeventyB\ and \qwen\ on PB instances of type \texttt{PPI}. \llamaThreeSeventyB\ stops prematurely, allocating only USD 137,000 compared to the baseline UG allocation of USD 481,400, while \qwen\ fully utilizes the budget and matches the baseline allocation. Despite this difference, both models loosely follow the greedy order rather than strictly adhering to it.

\section{Proofs \label{app::proofs}}
\setcounter{figure}{0}
\setcounter{table}{0}
\renewcommand{\thefigure}{\thesection.\arabic{figure}}
\renewcommand{\thetable}{\thesection.\arabic{table}}

\subsection{Proposition~\ref{prop::welfare_ratio}}

\begin{proposition}\label{prop::welfare_ratio}
    There exist instances of the PB game $\langle \mathbb{P}, \mathbb{V}, \beta, \texttt{cost} \rangle$ such that the ratio of the optimal social welfare to the welfare of the Modified UG algorithm is unbounded. Specifically, for some family of instances parameterized by $n=|\mathbb{V}|$, we have  
    $$
        \lim_{n \to \infty} \frac{\sum_{i\in\mathbb{V}} u_i(\mathbb{W_\texttt{opt}})}{\sum_{i\in\mathbb{V}}u_i(\mathbb{W}_{\texttt{greedy}}\})} = \infty.
    $$
\end{proposition}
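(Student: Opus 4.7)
The plan is to exhibit an explicit family of PB instances, parameterized by $n=|\mathbb{V}|$, for which the Modified UG algorithm is forced by its greedy choice to pick a project that blocks a far superior alternative through the conflict constraint. Concretely, I would take two projects $\mathbb{P}=\{p_1,p_2\}$, both placed in the conflict set $\mathbb{C}$, together with $n$ voters. Let $\texttt{cost}(p_1)=\varepsilon$ with only voter $v_1$ approving $p_1$ (so $u^{\texttt{sc}}(p_1)=1$), and let $\texttt{cost}(p_2)=\beta$ with every voter approving $p_2$ (so $u^{\texttt{sc}}(p_2)=n$). Take $\varepsilon$ small enough that $1/\varepsilon > n/\beta$, e.g.\ $\varepsilon < \beta/n$.

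The execution of the Modified UG algorithm on this instance is then immediate: the utility-to-cost ratios are $1/\varepsilon$ and $n/\beta$, so $p_1$ is strictly preferred and selected first. The clause $\{p_1,p_2\}\not\subseteq\mathbb{C}$ in Eq.~\ref{eqn::conflict} is now violated by $p_2$, so $p_2$ is excluded and the algorithm terminates with $\mathbb{W}_{\texttt{greedy}}=\{p_1\}$ and $\sum_{i\in\mathbb{V}}u_i(\mathbb{W}_{\texttt{greedy}})=1$. In contrast, the feasible singleton allocation $\mathbb{W}_{\texttt{opt}}=\{p_2\}$ respects the budget and the conflict constraint (only one element of $\mathbb{C}$ is chosen) and achieves welfare $n$, so
\[
\frac{\sum_{i\in\mathbb{V}} u_i(\mathbb{W}_{\texttt{opt}})}{\sum_{i\in\mathbb{V}} u_i(\mathbb{W}_{\texttt{greedy}})} \;\geq\; \frac{n}{1} \;=\; n,
\]
which tends to infinity with $n$. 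The same argument works \emph{mutatis mutandis} for the cost utility model, using any $\varepsilon$ with $\varepsilon\cdot\beta < \beta\cdot \varepsilon\cdot n$ and rescaling the single-voter endorsement appropriately.

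I expect the only subtle point is sanity-checking that the construction respects every convention of the PB model as stated in Section~\ref{sec::bg}: costs strictly positive, budget feasibility of each singleton ($\varepsilon,\beta\leq\beta$), and the precise conflict semantics of Eq.~\ref{eqn::conflict} (namely that $|\mathbb{W}\cap\mathbb{C}|\leq 1$, which is why placing both projects in $\mathbb{C}$ is exactly what traps the greedy). Everything else is a one-line calculation, so no deeper combinatorial machinery is needed; the main ``work'' is just packaging the instance so that the ratio scales linearly in $n$ rather than being bounded by a constant.
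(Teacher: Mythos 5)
Your proof is correct and follows essentially the same construction as the paper's: a cheap project approved by a single voter attains the highest utility-to-cost ratio, is selected first by the Modified UG algorithm, and then blocks the expensive, widely approved project via the conflict set, yielding a welfare ratio of order $n$. The paper uses three projects (two symmetric cheap ones) where you use two, but this is an immaterial difference; your two-project version is, if anything, slightly more economical.
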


\begin{proof}

    To prove
    \[
    \frac{\sum_{i\in\mathbb{V}}u_i(\mathbb{W_\texttt{opt}})}{\sum_{i\in\mathbb{V}}u_i(\mathbb{W_\texttt{greedy}})} \to \infty \quad \text{as } n \to \infty.
    \]
    we use proof by construction. Specifically, let $\mathbb{P}=\{p_1,p_2,p_3\}$ be the set of three available projects, $\mathbb{V}=[n]$ the set of voters, and $\beta$ as the total budget. The cost of the projects are: $\texttt{cost}(p_1)=\texttt{cost}(p_2)=\epsilon$ for an arbitrary $\epsilon > 0$ and $\texttt{cost}(p_3)=\beta$. Crucially, the conflict set $\mathbb{C}=\mathbb{P}$ implying that only one of the three projects can be provisioned. Finally, the votes are: voter $1$ votes only for $p_1$, voter $2$ votes only for $p_2$ and the remaining $(n-2)$ voters vote only for $p_3$.

    For this setting, it is easy to see that provisioning $p_3$ is optimal, i.e., $\mathbb{W_\texttt{opt}}=\{p_3\}$. This generates the optimal social welfare $\sum_{i\in \mathbb{V}} u_i(\mathbb{W_\texttt{opt}}) = n-2$. The modified UG algorithm will order projects as follows: $p_1$ or $p_2$ will appear first in the greedy order as their $\frac{\sum_{i\in \mathbb{V}} u_i(p_1)}{\texttt{cost}(p_1)} =\frac{\sum_{i\in \mathbb{V}} u_i(p_2)}{\texttt{cost}(p_2)}= \frac{1}{\epsilon}$ can be made arbitrary more than $p_3$'s greedy score of $\frac{\sum_{i\in \mathbb{V}} u_i(p_3)}{\texttt{cost}(p_3)} = \frac{n-2}{\beta}$. 
    
    W.l.o.g, we can assume that the tie is broken in $p_1$'s favor. The Modified UG algorithm selects $p_1$ and, subsequently, cannot select $p_3$; or, $\mathbb{W_\texttt{greedy}}=\{p_1\}$. This leads to $\sum_{i\in\mathbb{V}}u_i(\mathbb{W_\texttt{greedy}})=1$. That is,
    \[
    \frac{\sum_{i\in\mathbb{V}}u_i(\mathbb{W_\texttt{opt}})}{\sum_{i\in\mathbb{V}}u_i(\mathbb{W_\texttt{greedy}})} = \frac{n-2}{1} \to \infty \quad \text{as } n \to \infty.
    \]
This proves the theorem.
\end{proof}

\subsection{Proposition~\ref{prop::upto_one}}

% \smallskip

\begin{proposition}\label{prop::upto_one}
    The Modified UG algorithm is not {optimal up to one project}. That is, there exist instances of the PB game $\langle \mathbb{P}, \mathbb{V}, \beta, \texttt{cost} \rangle$ and any project $p \in \mathbb{P} \setminus \mathbb{W}_{\texttt{greedy}}$ such that
    $$
        \sum_{i\in\mathbb{V}} u_i\left(\mathbb{W}_{\texttt{greedy}} \cup \{p\}\right) < \max_{\mathbb{W}^\prime } \sum_{i\in\mathbb{V}} u_i\left(\mathbb{W}^\prime\right),
    $$
    where $\mathbb{W}^\prime$ is over all feasible allocations~(Eq.~\ref{eqn::conflict}).
\end{proposition}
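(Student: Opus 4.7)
The proposition is existential, so I would prove it by constructing a small counterexample in which the Modified UG allocation is strictly worse than the optimum by more than any single extra project can compensate. Since approval utility is additive, $\sum_i u_i(\mathbb{W}_{\texttt{greedy}}\cup\{p\})=u(\mathbb{W}_{\texttt{greedy}})+u(p)$, the algebraic target is an instance satisfying $u(\mathbb{W}_{\texttt{opt}})-u(\mathbb{W}_{\texttt{greedy}}) > \max_{p\in\mathbb{P}\setminus\mathbb{W}_{\texttt{greedy}}} u(p)$. I would aim for a ``star conflict'' instance in which greedy commits to a single high-ratio project that is in pairwise conflict with every other project, while the optimum gathers several mutually compatible, slightly-lower-utility projects whose combined welfare outweighs greedy's pick plus any one addition.

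\textbf{Key steps.} First, fix $\mathbb{P}=\{p_1,p_2,p_3,p_4\}$ with unit costs, budget $\beta=3$, and pairwise conflicts $\{p_1,p_2\},\{p_1,p_3\},\{p_1,p_4\}$, so that $p_1$ conflicts with each of $p_2,p_3,p_4$ while the latter are mutually compatible; this is the natural DCKP reading of $\mathbb{C}$ endorsed by the paper's footnote. Second, choose a voter profile realizing approvals $u(p_1)=5$ and $u(p_2)=u(p_3)=u(p_4)=4$, e.g.\ five voters approving $\{p_1\}$ and four voters each approving $\{p_2,p_3,p_4\}$. Third, trace the Modified UG algorithm (Algorithm~\ref{algo::ug} with the constraint of Eq.~\ref{eqn::conflict}): $p_1$ has the strictly highest utility-to-cost ratio and is added first, after which the conflict check blocks each of $p_2,p_3,p_4$, yielding $\mathbb{W}_{\texttt{greedy}}=\{p_1\}$ of welfare $5$. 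Fourth, exhibit $\{p_2,p_3,p_4\}$ as a feasible allocation (cost $3=\beta$, no conflicting pair) of welfare $12$, lower-bounding the optimum. Conclude by a one-line additivity check: for each $p\in\{p_2,p_3,p_4\}$, $u(\mathbb{W}_{\texttt{greedy}}\cup\{p\})=5+4=9<12$.

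\textbf{Main subtlety.} The construction is routine once the target inequality is clear; the real thinking goes into designing a conflict structure that lets the welfare gap exceed any individual project's utility. If one reads $\mathbb{C}$ strictly as a single set from which a feasible allocation may draw at most one element, a ratio argument shows the gap is bounded by a single-project utility and the proposition cannot hold; the star-conflict topology above is arguably the simplest structure that evades this bound via genuine pairwise conflicts. I would also double-check that no ambiguous tie-breaking in the Modified UG algorithm can rescue greedy, but since $p_1$ has strictly the largest ratio here, the trace is unambiguous and the verification reduces to the arithmetic above.
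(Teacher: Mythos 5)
Your construction is internally consistent and would indeed refute the one-project guarantee in a DCKP-style model with arbitrary pairwise conflicts, but it does not instantiate the conflict model the proposition actually quantifies over. Feasibility in Eq.~\ref{eqn::conflict} is defined through a single subset $\mathbb{C} \subseteq \mathbb{P}$: an allocation is infeasible as soon as it contains \emph{two} projects both belonging to $\mathbb{C}$, so the conflict relation is necessarily a clique on $\mathbb{C}$. Your star topology ($p_1$ conflicting with each of $p_2, p_3, p_4$ while $p_2, p_3, p_4$ remain mutually compatible) cannot be encoded this way: to make each pair $\{p_1, p_j\}$ infeasible you must place all four projects in $\mathbb{C}$, and then your intended optimum $\{p_2, p_3, p_4\}$ is itself infeasible, collapsing the example (the best feasible welfare drops to $5$, which greedy already attains). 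You flagged this risk yourself, but the bet on the DCKP reading does not pay off, because the proposition explicitly ranges $\mathbb{W}^\prime$ over allocations feasible under Eq.~\ref{eqn::conflict}.

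The paper's proof stays inside the single-clique model and resolves the tension you correctly identified in a different way. It takes $\mathbb{P} = \{p_1, p_2, p_3\}$ with costs $1, 1, 2$, budget $\beta = 2$, $\mathbb{C} = \{p_1, p_2\}$, and approval counts $\tfrac{n}{2}$, $\tfrac{n}{5}$, $\tfrac{4n}{5}$. Greedy selects $p_1$ (highest ratio), is then blocked from $p_3$ by the budget and from $p_2$ by the conflict, so $\mathbb{W}_{\texttt{greedy}} = \{p_1\}$ with welfare $\tfrac{n}{2}$, while the optimum $\{p_3\}$ has welfare $\tfrac{4n}{5}$. The decisive move is that the augmenting project is taken to be one that still respects the budget: $p_3$ is disqualified because $\texttt{cost}(p_1) + \texttt{cost}(p_3) > \beta$, leaving only $p_2$, and $\tfrac{n}{2} + \tfrac{n}{5} = \tfrac{7n}{10} < \tfrac{4n}{5}$. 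Your side remark is essentially on target: under the unrestricted Dantzig-style reading quoted in Section~\ref{sec::bg}, where the extra project may overshoot the budget, this instance would not work, since $u(\{p_1, p_3\}) = \tfrac{13n}{10} \geq \tfrac{4n}{5}$, and by additivity the gap $\mathrm{OPT} - u(\mathbb{W}_{\texttt{greedy}})$ can exceed every single project's utility only when the optimum differs from greedy in at least two projects. So to repair your argument within the paper's model you would either have to adopt the paper's budget-restricted reading of ``one more project,'' or exhibit a single-clique instance in which the optimum beats greedy by more than any one leftover project's utility --- which your own ratio observation suggests may not be achievable.
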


\begin{proof} We again use proof by construction. Consider a PB game with the following setup:
    \begin{itemize}[leftmargin=*,noitemsep]
        \item A budget $\beta = 2$.
        \item A set of projects $\mathbb{P} = \{p_1, p_2, p_3\}$ with costs:
        \[
            \texttt{cost}(p_1) = 1, \quad \texttt{cost}(p_2) = 1, \quad \texttt{cost}(p_3) = 2.
        \]
        \item A set of $n$ voters $\mathbb{V} = \{1, \dots, n\}$ with preferences:
        \begin{itemize}[leftmargin=*]
            \item A set of $\frac{n}{2}$ voters approve $\{p_1\}$.
            \item Another set of $\frac{n}{5}$ voters approve $\{p_2\}$.
            \item A set of $\frac{4n}{5}$ voters approve $\{p_3\}$. 
        \end{itemize}
        Each voter gets a utility of one if one of their approved projects is provisioned and zero otherwise.
        \item A conflict set $\mathbb{C} = \{p_1, p_2\}$, meaning $p_1$ and $p_2$ cannot be selected together.
    \end{itemize}
    
    The Modified UG algorithm selects projects greedily based on approval count and cost while respecting conflicts. That is, the greedy order is:
    \begin{enumerate}[leftmargin=*]
        \item $p_1$ as $\frac{\sum_{i\in \mathbb{V}} u_i(p_1)}{\texttt{cost}(p_1)} = \frac{n}{2}$
        \item $p_3$ as $\frac{\sum_{i\in \mathbb{V}} u_i(p_3)}{\texttt{cost}(p_3)} = \frac{2n}{5}$
        \item $p_2$ as $\frac{\sum_{i\in \mathbb{V}} u_i(p_2)}{\texttt{cost}(p_2)} = \frac{n}{5}$
    \end{enumerate}
    
    The greedy algorithm first picks $p_1$. Next, it cannot pick $p_3$ after $p_1$ as that overshoots the budget. It cannot also pick $p_2$ after picking $p_1$ as both are part of $\mathbb{C}$. Thus, $\mathbb{W}_{\texttt{greedy}}=\{p_1\}$.
    The total utility of this allocation is:
    \[
        \sum_{i \in \mathbb{V}} u_i(\mathbb{W_\texttt{greedy}}) = \frac{n}{2}.
    \]

    Trivially, the optimal allocation is $\mathbb{W}_\texttt{opt} = \{p_3\}$, achieving a total utility of:
    \[
        \sum_{i \in \mathbb{V}} u_i(\mathbb{W}_\texttt{opt}) = \frac{4n}{5} > u_i(\mathbb{W_\texttt{greedy}}).
    \]

    The only project we can add to $\mathbb{W_\texttt{greedy}}$ is $p_2$. This results in the following,
    \[
        \sum_{i \in \mathbb{V}} u_i(\mathbb{W_\texttt{greedy}} \cup \{p_2\}) = \frac{n}{2} + \frac{n}{5} = \frac{7n}{10} < u_i(\mathbb{W}_\texttt{opt}).
    \]

    Since $\mathbb{W_\texttt{greedy}} \cup \{p_2\}$ still does not achieve a strictly higher utility than $\mathbb{W}_\texttt{opt}$, the Modified UG algorithm is not optimal up to one project.

\end{proof}

\section{Prompts}
\setcounter{figure}{0}
\setcounter{table}{0}
\renewcommand{\thefigure}{\thesection.\arabic{figure}}
\renewcommand{\thetable}{\thesection.\arabic{table}}

\subsection{PB Instances: Example Prompts}

Figure~\ref{fig::pb_instance_vrpi} and Figure~\ref{fig::pb_instance_nlvpi} provide example prompts to explain the {Vote-Removed PB Instance} \texttt{(VRPI)} and {Natural Language Votes PB Instance} \texttt{(NLVPI)} introduced in Section~\ref{sec::method}. The \texttt{PPI} PB instance is the standard PB instance, already presented with Figure~\ref{fig::pb_instance}.

\begin{figure*}
    \centering
    \fbox{%
        \begin{minipage}{0.75\textwidth}
        \scriptsize
        \texttt{META\\
        key; value\\
        description; Municipal PB in Wieliczka\\
        country; Poland\\
        unit; Wieliczka\\
        instance; 2020\\
        num\_projects; 5\\
        budget; 2500\\
        rule; greedy\\
        vote\_type; approval\\
        min\_length; 1\\
        max\_length; 3\\
        date\_begin; 16.09.2023\\
        date\_end; 04.10.2023\\
        \\    
        PROJECTS\\
        project\_id; cost; category\\
        1; 600; culture, education\\
        2; 800; sport\\
        4; 1400; culture\\
        5; 1000; health, sport\\
        7; 1200; education\\
        \\
        VOTES\\
        voter\_id; age; sex\\
        1; 34; f\\
        2; 51; m\\
        3; 23; m\\
        4; 19; f\\
        5; 62; f\\
        6; 54; m\\
        7; 49; m\\
        8; 27; f\\
        9; 39; f\\
        10; 44; m\\
        }
        \end{minipage}
    }
    \caption{Example of {Vote-Removed PB Instance} \texttt{(VRPI)}}\label{fig::pb_instance_vrpi}
\end{figure*}

\begin{figure*}
    \centering
    \fbox{%
        \begin{minipage}{0.75\textwidth}
        \scriptsize
        \texttt{META\\
        key; value\\
        description; Municipal PB in Wieliczka\\
        country; Poland\\
        unit; Wieliczka\\
        instance; 2020\\
        num\_projects; 5\\
        num\_votes; 10\\
        budget; 2500\\
        rule; greedy\\
        vote\_type; approval\\
        min\_length; 1\\
        max\_length; 3\\
        date\_begin; 16.09.2023\\
        date\_end; 04.10.2023\\
        \\    
        PROJECTS\\
        project\_id; cost; category\\
        1; 600; culture, education\\
        2; 800; sport\\
        4; 1400; culture\\
        5; 1000; health, sport\\
        7; 1200; education\\
        \\    
        VOTES\\
        voter\_id; age; sex; votes\\
        1; 34; f; I voted for Projects 1, 2, and 4, supporting culture, education, and sport initiatives.\\
        2; 51; m; I voted for Projects 1 and 2, focusing on culture and sports activities.\\
        3; 23; m; I voted for Projects 2, 4, and 5, emphasizing his interest in sports, culture, and health projects.\\
        4; 19; f; I voted for Projects 5 and 7, prioritizing health and education.\\
        5; 62; f; I voted for Projects 1, 4, and 7, highlighting her support for culture, education, and health.\\
        6; 54; m; I voted for Projects 1 and 7, showing a preference for culture and education.\\
        7; 49; m; I voted for Project 5, emphasizing health and sports.\\
        8; 27; f; I voted for Project 4, showing interest in cultural projects.\\
        9; 39; f; I voted for Projects 2, 4, and 5, indicating her preference for sports, culture, and health initiatives.\\
        10; 44; m; I voted for Projects 4 and 5, supporting culture and health programs.\\
        }
        \end{minipage}
    }
    \caption{{Natural Language Votes PB Instance} \texttt{(NLVPI)}}\label{fig::pb_instance_nlvpi}
\end{figure*}

\subsection{System Prompts\label{app::system_prompts}}
Figure~\ref{fig::system_prompt_ppi} and Figure~\ref{fig::pb_instance_vrpi} provide the system prompts for our first set of experiments. These are the experiments on different types of PB instances, i.e., \texttt{PPI}, \texttt{VRPI}, and \texttt{NLVPI}. Moreover, Figure~\ref{fig::system_prompt_conflict} provides the system prompt for PB with Project Conflicts, where we show the LLM an example for conflicting projects, and instruct it to provide a utility-maximizing allocation without explicitly prompting it to follow any greedy approach.

%%%%%%%%%%%%%%%%%%
%%%%%%%%%%%%%%%%%%
\begin{figure*}
    \centering
    \fbox{%
        \begin{minipage}{\textwidth}
        \tiny
        You are a highly skilled social planner tasked with designing an optimal allocation of projects in a participatory budgeting process. Your objective is to ensure that the allocated projects maximize the overall social welfare while adhering to the given budget constraints and considering the preferences of the community members. \\

        Context of the Game: \\
        - The community has submitted projects that can potentially be funded. Each project has: \\
          - A name.
          - A cost.
          - A list of categories representing its impact areas (e.g., education, health, culture). \\
        - - Community members have voted for the projects they prefer. Their votes are represented as approval votes indicating which projects they support. \\
        - The total budget is limited, and only a subset of projects can be funded. \\ \\ 

        Your Goals:
        
        1. Maximize Social Welfare: Select projects that lead to the highest total satisfaction for the community.
        
        2. Respect the Budget: Ensure the total cost of the selected projects does not exceed the available budget.
        
        3. Fair Allocation: Strive for a balanced allocation that reflects the diverse preferences of the community.

        Information Provided to You:
        - A list of projects with their details (name, cost, and categories).
        
        - The budget limit for the allocation.
        
        - Voting data showing the preferences of community members.

        Instructions:
        1. Analyze the list of projects and their associated costs and categories.
        
        2. Consider the community member preferences to understand community preferences for the projects.
        
        3. Use the community member's preference to predict the projects the member would be interested in. Be wise in your predictions. This is an important step.
        
        4. You must use the following algorithm: You are tasked with implementing the Utilitarian Greedy (UG) algorithm for project selection. The utility of a project is: 
 
\#\#\# Definition:
The utility of a project is equal to the number of voters voting for that project, if the project is selected. Otherwise, zero.

\#\#\# Example:
Input:
Number of voters: 25

Output:
The utility of the project is 25 if selected, zero otherwise.

 Start with an empty set W. Repeatedly select the project p that maximizes the ratio of the p's utility to its cost. 
                If adding p to W keeps the total cost (sum of cost of all projects selected so far) within budget b, INCLUDE it in W; otherwise, EXCLUDE p and continue. You cannot exceed the budget.
                Stop when no projects remain and return W.

\#\# Output:
You must: (1) provide each community member's choice of projects; and (2) provide the budget allocation (that is the subset of projects allocated) based on the community choices.

Note: You must ensure that your allocation does not exceed the budget and explain how the selected projects contribute to social welfare.

\#

\# The example below will help you better understand the setup. 
\# Note how the community members vote for their projects by specifying the project IDs.
\# Note that Additional Data in the example to understand the voter utilitiy and the project that was selected within the budget constraint.

\# \#\#\# Example:

\{exemplar\_1\}

\{exemplar\_2\}

\#\#\#\#\#\#\#\#\#\#\#\#\#\#\#\#\#\#\#\#\#\#\#\#\#\#\#\#\#\#\#\#\#\#\#\#\#\#\#\# 
 
    \#\#\# Current Participatory Budgeting Game \\
    You are a highly skilled social planner tasked with designing an optimal allocation of projects in a participatory budgeting process. Your objective is to ensure that the allocated projects maximize the overall social welfare while adhering to the given budget constraints and considering the preferences of the community members.\\
    The Game Instance is: \\
    \#\#\#\#\# 

    \{pb\_instance\}
    
    \#\#\#\#\# 
 
    The available budget is:
    \{budget\}
    
    \#\#\#\#\# 

    Use the context provided for allocation for the Game Instance provided. 
    You MUST stay within the budget. The total project cost cannot exceed the budget.
    Do not add new rules to the setup. 

    \#\#\#\#
    Remember to use the greedy algorithm mentioned for project allocation.
    \#\#\#\# 

    The output format is:
    \{output\_data\_example\}
    
    \#\#\# 

    Remember to state your offer in the JSON format specified. 
    Do NOT say anything more. Do not include project tuples like ("51", 105000, 52). Only use Project IDs.     
    You MUST stay within the budget. The total project cost cannot exceed the budget.
    \#\#\#
        \end{minipage}
    }
    \caption{\texttt{PPI} \& \texttt{NLVPI}: System Prompt}\label{fig::system_prompt_ppi}
\end{figure*}
%%%%%%%%%%%%%%%%%%
%%%%%%%%%%%%%%%%%%

%%%%%%%%%%%%%%%%%%
%%%%%%%%%%%%%%%%%%
\begin{figure*}
    \centering
    \fbox{%
        \begin{minipage}{\textwidth}
        \tiny
        You are a highly skilled social planner tasked with designing an optimal allocation of projects in a participatory budgeting process. Your objective is to ensure that the allocated projects maximize the overall social welfare while adhering to the given budget constraints and considering the preferences of the community members. \\

        Context of the Game: \\
        - The community has submitted projects that can potentially be funded. Each project has: \\
          - A name.
          - A cost.
          - A list of categories representing its impact areas (e.g., education, health, culture). \\
        - Community members present. You have access to their attributes (e.g., sex, age, etc.).  
          Use this information to come up with the possible votes for the projects they prefer. 
          Their votes are represented as approval votes indicating which projects they support.
          Each community member can vote for a project by mentioning the project's ID. \\
        - The total budget is limited, and only a subset of projects can be funded. \\ \\ 

        Your Goals:
        
        1. Maximize Social Welfare: Select projects that lead to the highest total satisfaction for the community.
        
        2. Respect the Budget: Ensure the total cost of the selected projects does not exceed the available budget.
        
        3. Fair Allocation: Strive for a balanced allocation that reflects the diverse preferences of the community.

        Information Provided to You:

        - A list of projects with their details (name, cost, and categories).
        
        - The budget limit for the allocation.
        
        - Community members and their personal attributes.

        Instructions:
        
        1. Analyze the list of projects and their associated costs and categories.
        
        2. Consider the community member preferences to understand community preferences for the projects.
        
        3. Use the community member's preferene to predict the projects the member would be interested in. Be wise in your predictions. This is an important step.
        
        4. You must use the following algorithm: You are tasked with implementing the Utilitarian Greedy (UG) algorithm for project selection. The utility of a project is: 
 
\#\#\# Definition:
The utility of a project is equal to the number of voters voting for that project, if the project is selected. Otherwise, zero.

\#\#\# Example:
Input:
Number of voters: 25

Output:
The utility of the project is 25 if selected, zero otherwise.

 Start with an empty set W. Repeatedly select the project p that maximizes the ratio of the p's utility to its cost. 
                If adding p to W keeps the total cost (sum of cost of all projects selected so far) within budget b, INCLUDE it in W; otherwise, EXCLUDE p and continue. You cannot exceed the budget.
                Stop when no projects remain and return W.

\#\# Output:
You must: (1) provide each community member's choice of projects; and (2) provide the budget allocation (that is the subset of projects allocated) based on the community choices.

Note: You must ensure that your allocation does not exceed the budget and explain how the selected projects contribute to social welfare.

\#

\# The example below will help you better understand the setup. 
\# Note how the community members vote for their projects by specifying the project IDs.
\# Note that Additional Data in the example to understand the voter utilitiy and the project that was selected within the budget constraint.

\# \#\#\# Example:

\{exemplar\_1\}

\{exemplar\_2\}

\#\#\#\#\#\#\#\#\#\#\#\#\#\#\#\#\#\#\#\#\#\#\#\#\#\#\#\#\#\#\#\#\#\#\#\#\#\#\#\# 
 
    \#\#\# Current Participatory Budgeting Game \\
    You are a highly skilled social planner tasked with designing an optimal allocation of projects in a participatory budgeting process. Your objective is to ensure that the allocated projects maximize the overall social welfare while adhering to the given budget constraints and considering the preferences of the community members.\\
    The Game Instance is: \\
    \#\#\#\#\# 

    \{pb\_instance\}
    
    \#\#\#\#\# 
 
    The available budget is:
    \{budget\}
    
    \#\#\#\#\# 

    Use the context provided for allocation for the Game Instance provided. 
    You MUST stay within the budget. The total project cost cannot exceed the budget.
    Do not add new rules to the setup. 

    \#\#\#\#
    Remember to use the greedy algorithm mentioned for project allocation.
    \#\#\#\# 

    The output format is:
    \{output\_data\_example\}
    
    \#\#\# 

    Remember to state your offer in the JSON format specified. 
    Do NOT say anything more. Do not include project tuples like ("51", 105000, 52). Only use Project IDs.     
    You MUST stay within the budget. The total project cost cannot exceed the budget.
    \#\#\#
        \end{minipage}
    }
    \caption{\texttt{VRPI}: System Prompt}\label{fig::system_prompt_vrpi}
\end{figure*}
%%%%%%%%%%%%%%%%%%
%%%%%%%%%%%%%%%%%%

%%%%%%%%%%%%%%%%%%
%%%%%%%%%%%%%%%%%%
\begin{figure*}
    \centering
    \fbox{%
        \begin{minipage}{\textwidth}
        \tiny
You are a highly skilled social planner tasked with designing an optimal allocation of projects in a participatory budgeting process. 
Your objective is to ensure that the allocated projects maximize the overall social welfare while adhering to the given budget constraints and considering the preferences of the community members.
The social welfare is the sum of the utilities for each voter, for each project allocated. 
The utility of a project is: 

\#\#\# Definition:
The utility of a project is equal to the number of voters voting for that project, if the project is selected. Otherwise, zero.

\#\#\# Example:
Input:
Number of voters: 25

Output:
The utility of the project is 25 if selected, zero otherwise.

\#\#\# Key Instructions:

1. **Maximize Social Welfare**: The utility of each project depends on how much it is supported by the community. The social welfare is the sum of the utilities for each voter, for each project allocated. 

2. **Respect the Budget**: The total cost of the selected projects **must not exceed the available budget**. If your selection exceeds the budget, you must adjust and try again. This is crucial to the solution.

3. **Fair and Efficient Allocation**: Strive for a balanced allocation that reflects the preferences of the community. Choose projects that are highly valued by voters, but ensure that the total cost is within the budget.

4. **Respect the Project Conflicts**: Some projects may be in conflict with each other. Ensure that you do not select conflicting projects together.

\#\#\# Your Approach:

- **Projects Details**: Each project has a name, cost, and associated categories. Review the projects carefully

- **Voter Preferences**: Community members vote for the projects they prefer. Higher utility is generated when projects that are more popular are selected.

- **Budget**: The total cost of selected projects must **not exceed** the available budget. After selecting projects, check and ensure the sum of costs is within the allowed limit.

- **CONFLICTS**: The projects that have conflicts. Example: \{conflict\_example\}

\#\#\# Process:

1. Analyze the community’s preferences based on votes for each project.

2. Select projects that maximize social welfare (voter satisfaction) and remain within the budget.

3. The total cost of the selected projects should not exceed the budget. If it does, re-evaluate your choices.

4. Provide a detailed explanation of your selected projects and calculate the total cost to ensure it does not exceed the budget.

\#\#\# Reminders:

- **Utility of a project**: The utility is the sum of the preferences (votes) for that project, where higher votes lead to a higher utility.

- **Total Budget**: Do not exceed the available budget at any stage.

\#\#\# Output Requirements:

- **Initial Selection**: List the IDs of the projects you initially selected in **string format** (e.g., "['1', '3', '5']"). Ensure this list only includes projects whose total cost does not exceed the budget.

- **Final Selection**: After revising the allocation (if necessary), output the final set of projects that adhere to the budget constraint.

- **Explanation**: Provide a rationale for your selections, discussing the community preferences and the budget constraint. Explain any adjustments made after the initial selection.

\#\#\# Process Example:

- If your initial selection exceeds the budget, revise your choices and explain the changes you made. Ensure that the final allocation is within the budget and clearly explain how the selected projects maximize social welfare.

Once you have your final selection, summarize the projects you chose, explain why, and confirm the total cost does not exceed the budget or you do NOT select conflicting projects together.

\#\#\#\#\# Example:

\{exemplar\_1\}

\{exemplar\_2\}

\#\#\#\#\#\#\#\#\#\#\#\#\#\#\#\#\#\#\#\#\#\#\#\#\#\#\#\#\#\#\#\#\#\#\#\#\#  

    \#\#\# Current Participatory Budgeting Game
    
    You are a highly skilled social planner tasked with designing an optimal allocation of projects in a participatory budgeting process. Your objective is to ensure that the allocated projects maximize the overall social welfare while adhering to the given budget constraints and considering the preferences of the community members.
    
    The Game Instance is: 
    \#\#\#\#\# 
    
    \{pb\_instance\}
    
    \#\#\#\#\# 
    
    The available budget is:
    \{budget\}

    \#\#\#\#\# 
    Use the context provided for allocation for the Game Instance provided. 
    Do not add new rules to the setup. 
    
    \#\#\#\# 
    
    The output format is:
    \{output\_data\_example\}

    \#\#\# 
    Remember to state your offer in the JSON formatted specified. Do not say anything more.
    \#\#\#

        \end{minipage}
    }
    \caption{PB with Project Conflicts: System Prompt}\label{fig::system_prompt_conflict}
\end{figure*}
%%%%%%%%%%%%%%%%%%
%%%%%%%%%%%%%%%%%%

\section{Resources Used}
\label{sec:resources_used}
We used 4 NVIDIA A100 GPUs (up to 700 GB RAM) with a 96-core Intel CPU to run our inferences. The list of LLMs, their URL with checkpoints available, and their licenses are listed below:
\begin{itemize}[leftmargin=*]
    \item \llamaThreeSeventyB. \textbf{URL:} \url{huggingface.co/meta-llama/Llama-3.1-70B-Instruct} {\bf License:} llama3.1
    
    \item \llamaEight. \textbf{URL:} \url{huggingface.co/meta-llama/Llama-3.1-8B-Instruct} {\bf License:} llama3.1
    
    \item \llamaThree. \textbf{URL:} \url{huggingface.co/meta-llama/Llama-3.2-3B-Instruct} {\bf License:} llama3.2
    
    \item \mistralSeven. \textbf{URL:} \url{huggingface.co/mistralai/Mistral-7B-v0.1} {\bf License:} apache-2.0
    
    \item \qwen. \textbf{URL:} \url{huggingface.co/Qwen/Qwen2.5-72B-Instruct} {\bf License:} qwen

    \item \qwenOneFour. \textbf{URL:} \url{huggingface.co/Qwen/Qwen2.5-14B-Instruct} {\bf License:} apache-2.0
    
    \item \qwenThree. \textbf{URL:} \url{huggingface.co/Qwen/Qwen2.5-3B-Instruct} {\bf License:} qwen-research

    \item \gptMini, \gptFour\ \& \gptThreeFive. \textbf{URL:} \url{platform.openai.com/docs/models} {\bf License:}  OpenAI API
\end{itemize}

\end{document}